\documentclass{article}

% if you need to pass options to natbib, use, e.g.:
%     \PassOptionsToPackage{numbers, compress}{natbib}
% before loading neurips_2020

% ready for submission
\usepackage[final]{neurips_2020}

% to compile a preprint version, e.g., for submission to arXiv, add add the
% [preprint] option:
%     \usepackage[preprint]{neurips_2020}

% to compile a camera-ready version, add the [final] option, e.g.:
%     \usepackage[final]{neurips_2020}

% to avoid loading the natbib package, add option nonatbib:
% \usepackage[nonatbib]{neurips_2020}

\usepackage[utf8]{inputenc} % allow utf-8 input
\usepackage[T1]{fontenc}    % use 8-bit T1 fonts
\usepackage{hyperref}       % hyperlinks
\usepackage{url}            % simple URL typesetting
\usepackage{booktabs}       % professional-quality tables
\usepackage{amsfonts}       % blackboard math symbols
\usepackage{nicefrac}       % compact symbols for 1/2, etc.
\usepackage{microtype}      % microtypography
\usepackage{placeins}

% Recommended, but optional, packages for figures and better typesetting:
\usepackage{microtype}
\usepackage{graphicx}
\usepackage{subfigure}
\usepackage{booktabs} % for professional tables
\usepackage{multirow}
\usepackage[table,dvipsnames]{xcolor}
\newcommand{\STAB}[1]{\begin{tabular}{@{}c@{}}#1\end{tabular}}

\usepackage{hyperref}
\usepackage{algorithmic}
\usepackage{wrapfig}

\usepackage{natbib}
\bibliographystyle{abbrvnat}
\setcitestyle{round}

\usepackage[super]{nth}
\usepackage[ruled,vlined]{algorithm2e}       % algorithm
\usepackage{amsfonts,amsthm,amsmath,amssymb} % theorems
\usepackage{mathtools}
\usepackage{framed}
\usepackage{enumerate}
\usepackage{cleveref}
\usepackage{nicefrac}

\newtheorem{theorem}{Theorem}
\crefname{theorem}{theorem}{theorems}
\Crefname{theorem}{Theorem}{Theorems}

\newtheorem{assumption}{Assumption}
\crefname{assumption}{assumption}{Assumption}
\Crefname{assumption}{Assumption}{Assumptions}

\newtheorem{lemma}{Lemma}
\crefname{lemma}{lemma}{lemmas}
\Crefname{lemma}{Lemma}{Lemmas}

\newtheorem{proposition}{Proposition}
\crefname{proposition}{proposition}{propositions}
\Crefname{proposition}{Proposition}{Propositions}

\newtheorem{corollary}{Corollary}
\crefname{corollary}{corollary}{corollaries}
\Crefname{corollary}{Corollary}{Corollaries}

\crefname{conjecture}{conjecture}{conjectures}
\Crefname{conjecture}{Conjecture}{Conjectures}

\theoremstyle{definition}
\newtheorem{definition}{Definition}[section]

\newcommand{\winner}[1]{\cellcolor{gray!30}\textbf{#1}}

\newcommand{\real}[1][]{\mathbb{R}^{#1}}

\newcommand{\CVar}{\mathbb{C}}

\newcommand{\E}{\mathbb{E}}
\renewcommand{\P}{\mathbb{P}}

\newcommand{\diag}{\operatorname{diag}}

\newcommand{\kexpthree}{\textsc{k.EXP.3}\xspace}

\newcommand{\adacvar}{\textsc{Ada-CVaR}\xspace}
\newcommand{\trunkcvar}{\textsc{Trunc-CVaR}\xspace}
\newcommand{\mean}{\textsc{Mean}\xspace}
\newcommand{\softcvar}{\textsc{Soft-CVaR}\xspace}

% Attempt to make hyperref and algorithmic work together better:

\title{Adaptive Sampling for Stochastic Risk-Averse Learning}

% The \author macro works with any number of authors. There are two commands
% used to separate the names and addresses of multiple authors: \And and \AND.
%
% Using \And between authors leaves it to LaTeX to determine where to break the
% lines. Using \AND forces a line break at that point. So, if LaTeX puts 3 of 4
% authors names on the first line, and the last on the second line, try using
% \AND instead of \And before the third author name.

\author{%
Sebastian Curi\\
Dept. of Computer Science\\
ETH Zurich\\
\texttt{scuri@inf.ethz.ch}\\
\And
Kfir Y. Levy\\
Faculty of Electrical Engineering\\
Technion\\
\texttt{kfirylevy@technion.ac.il}\\
\AND
Stefanie Jegelka\\
CSAIL\\
MIT\\
\texttt{stefje@mit.edu}\\
\And
Andreas Krause \\
Dept. of Computer Science\\
ETH Zurich\\
\texttt{krausea@inf.ethz.ch}\\
}

\begin{document}

\maketitle

\begin{abstract}
In high-stakes machine learning applications, it is crucial to not only perform well {\em on average}, but also when restricted to {\em difficult} examples.
To address this, we consider the problem of training models in a risk-averse manner.
We propose an adaptive sampling algorithm for stochastically optimizing the {\em Conditional Value-at-Risk (CVaR)} of a loss distribution, which measures its performance on the $\alpha$ fraction of most difficult examples.
We use a distributionally robust formulation of the CVaR to phrase the problem as a zero-sum game between two players, and solve it efficiently using regret minimization.
Our approach relies on sampling from structured Determinantal Point Processes (DPPs), which enables scaling it to large data sets.
Finally, we empirically demonstrate its effectiveness on large-scale convex and non-convex learning tasks. 
\end{abstract}

\section{Introduction}
Machine learning systems are increasingly deployed in high-stakes applications.
This imposes reliability requirements that are in stark discrepancy with how we currently train and evaluate these systems.
Usually, we optimize {\em expected performance} both in training and evaluation via empirical risk minimization \citep{vapnik1992principles}.
Thus, we sacrifice occasional large losses on ``difficult'' examples in order to perform well on average.
In this work, we instead consider a {\em risk-averse} optimization criterion, namely the {\em Conditional Value-at-Risk} (CVaR), also known as the Expected Shortfall.
In short, the $\alpha$-CVaR of a loss distribution is the average of the losses in the $\alpha$-tail of the distribution.

Optimizing the CVaR is well-understood in the {\em convex} setting, where duality enables a reduction to standard empirical risk minimization using a modified, truncated loss function from \citet{rockafellar2000optimization}.
Unfortunately, this approach fails when {\em stochastically} optimizing the CVaR -- especially on non-convex problems, such as training deep neural network models. 
%on Fashion-MNIST and CIFAR-10.
A likely reason for this failure is that mini-batch estimates of gradients of the CVaR suffer from high variance.

\looseness -1 
To address this issue, we propose a novel {\em adaptive sampling algorithm -- \adacvar} (\Cref{sec:Adaptive}).
Our algorithm initially optimizes the mean of the losses but gradually adjusts its sampling distribution to increasingly sample tail events (difficult examples), until it eventually minimizes the CVaR (\Cref{subs:Sampler}).
Our approach naturally enables the use of standard stochastic optimizers (\Cref{subs:Learner}).
We provide convergence guarantees of the algorithm  (\Cref{subs:Convergence}) and an efficient implementation (\Cref{subs:Sampling}).
Finally, we demonstrate the performance of our algorithm in a suite of experiments (\Cref{sec:Experiments}).

\section{Related Work}

\paragraph{Risk Measures} 
Risk aversion is a well-studied human behavior, in which agents assign more weight to adverse events than to positive ones \citep{pratt1978risk}.
Approaches for modeling risk include using utility functions that emphasize larger losses \citep{rabin2013risk}; prospect theory that re-scales the probability of events \citep{kahneman2013prospect}; or direct optimization of coherent risk-measures \citep{artzner1999coherent}.
\citet{rockafellar2000optimization} introduce the CVaR as a particular instance of the latter class.
The CVaR has found many applications, such as portfolio optimization \citep{krokhmal2002portfolio} or supply chain management \citep{carneiro2010risk}, as it does not rely on specific utility or weighing functions, which offers great flexibility.

\paragraph{CVaR in Machine Learning} 
%In machine learning, the CVaR criterion has been considered in several works.
The $\nu$-SVM algorithm by \citet{scholkopf2000new} can be interpreted as optimizing the CVaR of the loss, as shown by \citet{gotoh2016cvar}.
Also related, \citet{shalev2016minimizing} propose 
%an adaptive sampling algorithm 
to minimize the {\em maximal loss} among all samples.
The maximal loss is the limiting case of the CVaR when $\alpha \to 0$.
\citet{fan2017learning} generalize this work to the top-$k$ average loss.
Although they do not mention the relationship to the CVaR, their learning criterion is equal to the CVaR for empirical measures.
For optimization, they use 
an algorithm proposed by \citet{ogryczak2003minimizing} to optimize the maximum of the sum of $k$ functions; this algorithm is the same as the ``truncated'' algorithm of \citet{rockafellar2000optimization} to optimize the CVaR.
Recent applications of the CVaR in ML include risk-averse bandits \citep{sani2012risk}, risk-averse reinforcement learning \citep{chow2017risk}, and fairness \citep{williamson2019fairness}.
All these use the original ``truncated'' formulation of \citet{rockafellar2000optimization} to optimize the CVaR.
One of the major shortcomings of this formulation is that mini-batch gradient estimates have high variance.
In this work, we address this via a method based on adaptive sampling, inspired by \citet{shalev2016minimizing}, that allows us to handle large datasets and complex (deep neural network) models.

\paragraph{Distributionally Robust Optimization} \looseness -1
The CVaR also has a natural \emph{distributionally robust optimization} (DRO) interpretation \citep[Section 6.3]{shapiro2009lectures}, which we exploit in this paper.
%With this perspective, \citet{namkoong2016stochastic} generalize the work of \citet{shalev2016minimizing} for Cressie-Read $f$-divergences and develop an efficient algorithm for a particular element of this family.
For example, \citet{ahmadi2012entropic} introduces the entropic value-at-risk by considering a different DRO set.
\citet{duchi2016statistics,namkoong2017variance,esfahani2018data,kirschner20} address related DRO problems, but with different uncertainty sets.
We use the DRO formulation of the CVaR to phrase its optimization as a game.
To solve the game, we propose an adaptive algorithm for the learning problem.
Our algorithm is most related to \citep{namkoong2016stochastic}, who develop an algorithm for DRO sets induced by Cressie-Read $f$-divergences. 
Instead, we use a different DRO set that arises in common data sets \citep{mehrabi2019survey} and we provide an \emph{efficient} algorithm to solve the DRO problem in large-scale datasets.

\section{Problem Statement}
\looseness -1 We consider supervised learning with a {\em risk-averse learner}.
The learner has a data set comprised of i.i.d.~samples from an unknown distribution, i.e., $D = \left\{ (x_1, y_1), \ldots (x_N, y_N) \right\} \in (\mathcal{X} \times \mathcal{Y})^N \sim \mathcal{D}^N$, and her goal is to learn a function $h_\theta: \mathcal{X} \to \mathcal{R}$ that is parametrized by $\theta \in \Theta \subset \real[d]$.
The performance of $h_\theta$ at a data point is measured by a {\em loss function} $l: \Theta\times\mathcal{X} \times \mathcal{Y} \to [0, 1]$.
We write the random variable $L_i(\theta) = l(\theta; x_i, y_i)$.
The learner's goal is to minimize the {\em CVaR of the losses} on the (unknown) distribution $\mathcal{D}$  w.r.t.~the parameters $\theta$.

\paragraph{CVaR properties} 
\begin{wrapfigure}{r}{0.45\textwidth}
\centering
        \includegraphics[width=0.45\textwidth]{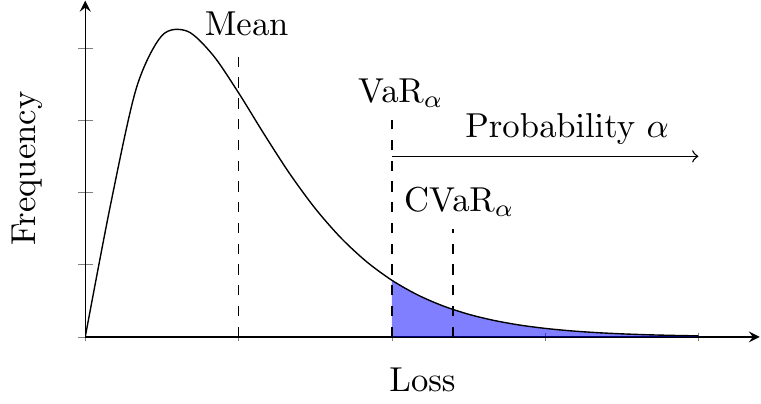}
    \caption{CVaR of a loss distribution}
    \label{fig:CVaR}
\end{wrapfigure} 
The CVaR of a random variable $L \sim P$ is defined as $\CVar^\alpha[L] = \E_P[L | L \geq \ell^\alpha]$, where $\ell^\alpha$ is the $1-\alpha$ quantile of the distribution, also called the {\em Value-at-Risk} (VaR).
We illustrate the mean, VaR and CVaR of a typical loss distribution in \Cref{fig:CVaR}.
It can be shown that the CVaR of a random variable has a natural {\em distributionally robust optimization} (DRO) formulation, namely as the expected value of the same random variable under a {\em different} law.
This law arises from the following optimization problem \citep[Sec.~6.3]{shapiro2009lectures}:
\begin{equation}
    \CVar^\alpha[L] = \max_{Q \in \mathcal{Q}^\alpha} \E_{Q}[L], \label{eq:DRO}
\end{equation}
%\kl{define $Q\ll P$}
where $\mathcal{Q}^\alpha = \left\{Q \ll P, \frac{dQ}{dP} \leq \frac{1}{\alpha} \right\} $. Here, $Q\ll P$ means that $Q$ is absolutely continuous w.r.t.~$P$.
The distribution $Q^\star$ that solves Problem \eqref{eq:DRO} places all the mass uniformly in the tail, i.e., the blue shaded region of \Cref{fig:CVaR}. Thus, optimizing the CVaR can be viewed as {\em guarding against a particular kind of distribution shift}, which reweighs arbitrary parts of the data up to a certain amount $\frac{1}{\alpha}$.
\citet{rockafellar2000optimization} prove strong duality for Problem \eqref{eq:DRO}.
The dual objective is:
\begin{equation}
    \CVar^\alpha[L] = \min_{\ell \in \real} \ell + \frac{1}{\alpha} \E_{P} \left[ \max\left\{0, L -\ell \right\} \right].\label{eq:Rockafellar}
\end{equation}

\paragraph{Learning with the CVaR}
Problem~\eqref{eq:Rockafellar} can be used to estimate the CVaR of a random variable by replacing the expectation $\E_{P}$ by the empirical mean $\hat{\E}$, yielding
%.
%The learning problem is:
\begin{equation}
   \min_{\ell \in \real, \theta \in \Theta} \ell + \frac{1}{\alpha N} %\hat{\E} 
   \sum\nolimits_{i=1}^N
   \left[ \max\left\{0, L_i(\theta)-\ell \right\} \right].\label{eq:BadLearning}
\end{equation}
For convex $L_i$,  Problem~\eqref{eq:BadLearning} has computable subgradients, and hence lends itself to subgradient-based optimization.
Furthermore, 
%when $L(\theta)$ is a convex function, 
in this case Problem~\eqref{eq:BadLearning} is jointly convex in $(\ell, \theta)$. We refer to this standard approach as \trunkcvar, as it effectively optimizes a modified loss, truncated at $\ell$.
%Furthermore, the solution $\ell^\star$ is the $1-\alpha$ quantile of the distribution $L(\theta^\star)$.

Problem~\eqref{eq:BadLearning} is indeed a sensible learning objective in the sense that the empirical CVaR concentrates around the population CVaR uniformly for all functions $h \in \mathcal{H}$.
%Surprisingly, we did not find this result in the literature, but the results follow from an application of the union bound.
%
\begin{proposition} \label{prop:uniform-convergence}
    Let $h: \mathcal{X} \to \mathcal{Y}$ be a finite function class $|\mathcal{H}|$.
    Let $L(h): \mathcal{H} \to [0, 1]$ be a random variable.
    Then, for any $0 < \alpha \leq 1$, with probability at least $1-\delta$, 
    \begin{align*} 
        \E\left[ \sup_{h \in \mathcal{H}} \left| \widehat{\CVar}^\alpha [L(h)] - \CVar^\alpha[L(h)] \right| \right] \leq \frac{1}{\alpha}\sqrt{\frac{\log(2 |\mathcal{H}| /\delta)}{N}}.
    \end{align*}
\end{proposition}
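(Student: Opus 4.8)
The plan is to reduce the claim, via the Rockafellar--Uryasev dual \eqref{eq:Rockafellar}, to a uniform Glivenko--Cantelli-type estimate for the survival functions of the losses. Write $F(\ell, h) := \ell + \tfrac1\alpha\E_P[\max\{0, L(h) - \ell\}]$ for the objective in \eqref{eq:Rockafellar} and $\widehat F(\ell, h)$ for its empirical version (with $\hat{\E}$ replacing $\E_P$), so that $\CVar^\alpha[L(h)] = \min_\ell F(\ell, h)$ and $\widehat{\CVar}^\alpha[L(h)] = \min_\ell \widehat F(\ell, h)$. First I would note that, since $L(h) \in [0,1]$, both minima are attained on $[0,1]$: for $\ell > 1$ the truncation vanishes and $F(\ell,h) = \ell$ is increasing, while for $\ell < 0$ the map $\ell \mapsto F(\ell, h)$ is non-increasing because its slope equals $1 - \tfrac1\alpha \le 0$; the same holds for $\widehat F$. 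Then, using $|\min_x f - \min_x g| \le \sup_x|f-g|$ on $[0,1]$ and the cancellation of the linear terms,
\[
  \big| \widehat{\CVar}^\alpha[L(h)] - \CVar^\alpha[L(h)] \big| \;\le\; \frac1\alpha \sup_{\ell \in [0,1]} \big| \hat{\E}[g_{h,\ell}] - \E_P[g_{h,\ell}] \big|, \qquad g_{h,\ell} := \max\{0, L(h) - \ell\} \in [0,1].
\]

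The main obstacle is that the family $\{g_{h,\ell}\}_{\ell\in[0,1]}$ is a continuum, so a direct union bound over $\ell$ is unavailable. I would handle this with the layer-cake identity $\max\{0, L(h)-\ell\} = \int_\ell^1 \mathbf{1}[L(h) > t]\,dt$, valid for $\ell \in [0,1]$ since $L(h) \le 1$. Taking expectations and using Fubini gives $\hat{\E}[g_{h,\ell}] - \E_P[g_{h,\ell}] = \int_\ell^1 \big( \hat{\P}[L(h) > t] - \P[L(h) > t] \big)\,dt$, and since the interval $[\ell,1]$ has length at most $1$,
\[
  \sup_{\ell\in[0,1]} \big| \hat{\E}[g_{h,\ell}] - \E_P[g_{h,\ell}] \big| \;\le\; \sup_{t\in[0,1]} \big| \hat{\P}[L(h) > t] - \P[L(h) > t] \big|.
\]
Combining the two displays and taking $\sup_{h\in\mathcal H}$ reduces everything to a uniform control of the empirical survival functions $\hat{\P}[L(h) > \cdot]$ over the finite class.

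To finish, I would invoke the DKW--Massart inequality, $\P\big[ \sup_t |\hat{\P}[L(h) > t] - \P[L(h) > t]| > \epsilon \big] \le 2e^{-2N\epsilon^2}$, once per $h$, and a union bound over $\mathcal H$ to obtain that with probability at least $1-\delta$, $\sup_{h}\sup_{t} |\hat{\P}[L(h) > t] - \P[L(h) > t]| \le \sqrt{\log(2|\mathcal H|/\delta)/(2N)} \le \sqrt{\log(2|\mathcal H|/\delta)/N}$; the two reductions above then give the stated bound, with the factor $\tfrac1\alpha$ inherited from \eqref{eq:Rockafellar}. (The $\delta$-dependence in the statement points to this high-probability reading; the in-expectation form follows by integrating the sub-Gaussian DKW tail and using that the maximum of $|\mathcal H|$ such deviations has expectation $O(\sqrt{\log|\mathcal H|/N})$.) If one prefers to avoid citing DKW, an equivalent route is to discretize $\ell$ on a grid of width $\eta$, use the $1$-Lipschitzness of $\ell \mapsto g_{h,\ell}$ together with monotonicity of the survival function to pass between grid points, apply Hoeffding on the grid, union-bound over the grid and $\mathcal H$, and optimize $\eta$; this reproduces the bound up to an extra logarithmic factor that can be absorbed.
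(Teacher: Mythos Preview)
Your proof is correct but takes a genuinely different route from the paper. The paper simply invokes the single-hypothesis CVaR concentration bounds of \citet{brown2007large}---which give two asymmetric inequalities, one of order $\tfrac{1}{\alpha}\sqrt{\log(1/\delta)/N}$ and one of order $\sqrt{\log(1/\delta)/(\alpha N)}$---and then takes a union bound over $\mathcal H$, reporting the looser of the two. Your argument is self-contained: you pass through the Rockafellar dual \eqref{eq:Rockafellar}, reduce via the layer-cake identity to a uniform Kolmogorov-type control of the empirical survival functions of $L(h)$, and finish with DKW--Massart plus a union bound over $\mathcal H$. This buys you a cleaner, symmetric two-sided bound (in fact with an extra factor $1/\sqrt{2}$ that you discard), makes the origin of the $1/\alpha$ factor transparent, and handles the continuum of truncation levels $\ell$ without any discretization. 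The paper's route is shorter only because it outsources the work to a citation; your route essentially reproduces, and slightly sharpens, what that citation proves. Your parenthetical remark about the redundant outer expectation in the statement is also apt: the paper's own proof establishes only the high-probability inequality.
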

\begin{proof}See \Cref{proof:uniform-convergence}.
\end{proof}
\vspace{-1em}
The result above is easily extended to classes $\mathcal{H}$ with finite VC (pseudo-)dimension. 
Concurrent to this work, \citet{lee2020learning}, \citet{soma2020statistical}, and \citet{mhammedi2020pac} present similar statistical rates based on different assumptions.

\paragraph{Challenges for Stochastic Optimization} 
In the common case that a variant of SGD is used to optimize the learning problem~\eqref{eq:BadLearning}, the expectation is approximated with a mini-batch of data.
But, when this batch is sampled uniformly at random from the data, only a fraction $\alpha$ of points will contain gradient information.
The gradient of the remaining points gets truncated to zero by the $\max\{\cdot\}$ non-linearity.
Furthermore, the gradient of the examples that \emph{do} contain information is scaled by $1/\alpha$, leading to exploding gradients.
These facts make stochastic optimization of Problem~\eqref{eq:BadLearning} extremely challenging, as we demonstrate in \Cref{sec:Experiments}.

\looseness -1 Our key observation is that the root of the problem lies in the {\em mismatch} between the sampling distribution $P$ and the unknown distribution $Q^\star$, from which we would ideally like to sample.
In fact, Problem~\eqref{eq:BadLearning} can be interpreted as a form of rejection sampling -- samples with losses smaller than $\ell$ are rejected.
It is well known that Monte Carlo estimation of rare events suffers from high variance \citep{rubino2009rare}.
To address this issue, we propose a novel sampling algorithm that \emph{adaptively} learns to sample events from the distribution $Q^\star$ while optimizing the model parameters $\theta$.

\section{\adacvar: Adaptive Sampling for CVaR Optimization} \label{sec:Adaptive}

%\subsection{Reformulation of CVaR Optimization} \label{subs:Reformulation}
\looseness -1
We directly address the DRO problem~\eqref{eq:DRO} on the empirical measure $\hat{P}$ for learning.
The DRO set is  $\mathcal{Q}^\alpha = \left\{q \in \real[N] \mid 0 \leq q_i \leq \frac{1}{k}, \sum_i q_i = 1 \right \}$ with $k=\lfloor \alpha N \rfloor$.
The learning problem becomes:
\begin{equation}
  \min_{\theta \in \Theta} \max_{q \in \mathcal{Q}^\alpha} \E_{q}[L_i(\theta)] = \min_{\theta \in \Theta} \max_{q \in \mathcal{Q}^\alpha} q^\top L(\theta), \label{eq:DROLearning}
\end{equation}  
where $L(\theta) \in \real[N]$ has $i$-th index $L_i(\theta)$.
The learning problem~\eqref{eq:DROLearning} is a minimax game between a $\theta$-player (the learner), whose goal is to \emph{minimize} the objective function by selecting $\theta \in \Theta$, against a $q$-player (the sampler), whose goal is to \emph{maximize} the objective function by selecting $q \in \mathcal{Q}^\alpha$.

To solve the game \eqref{eq:DROLearning}, we use techniques from regret minimization with partial (bandit) feedback. 
In particular, we exploit that one can solve minimax games by viewing both players as online learners that compete, and by equipping each of them with no-regret algorithms \citep{freund1999adaptive}.
With the partial (bandit) feedback model, we only need to consider a {\em small} subset of the data in each iteration of the optimization algorithm. 
In contrast, full-information feedback would require a full pass over the data per iteration, invalidating all benefits of stochastic optimization.

\begin{wrapfigure}{r}{0.5\textwidth}
\begin{minipage}{0.5\textwidth}
\vspace{-1.5em}
\begin{algorithm}[H]
 \caption{\adacvar} \label{alg:Game}
\begin{algorithmic}[1]
    \INPUT Learning rates $\eta_s$, $\eta_l$. 
    \STATE \textbf{Sampler:} Initialize k-DPP $w_1 = \mathbf{1}_N$.
    \STATE \textbf{Learner:} Initialize parameters $\theta_0 \in \Theta$.
    \FOR{$t=1,\ldots, T$}
        \STATE \textbf{Sampler:} Sample data point $i_t \sim q_{t} = \frac{1}{k}\P_{w_t}(i)$.
        \STATE \textbf{Learner:} $\theta_t = \theta_{t-1} - \eta_l \nabla L_{i_t}(\theta_{t-1}) $.
        \STATE \textbf{Sampler}: Build estimate $\hat{L}_{t} = \frac{L_{ i_t}(\theta_{t})}{q_{t, i_t}} [[i == i_t]]$.
        \STATE \textbf{Sampler}: Update k-DPP $w_{t+1, i} = w_{t, i} e^{\eta_s \hat{L}_{t, i}}$.
    \ENDFOR
    \OUTPUT $\bar{\theta}, \bar{q} \sim_{u.a.r} \{(\theta_t, q_t)\}_{t=1}^T$
\end{algorithmic}
\end{algorithm}
\end{minipage}
\vspace{-2em}
\end{wrapfigure}
Next, we describe and analyze an online learning algorithm for each of the two players and prove guarantees with respect to the DRO problem~\eqref{eq:DROLearning}. 
We outline the final algorithm, which we call \adacvar, in \Cref{alg:Game}, where we use an adaptive sampling scheme for the $q$-player and SGD for the $\theta$-player.
Initially, the $q$-player (sampler) plays the uniform distribution and the $\theta$-player (learner) selects any parameter in the set $\Theta$. In iteration $t$, the sampler samples a data point (or a mini-batch) with respect to the distribution $q_t$. Then, the learner performs an SGD step on the sample(s) selected by the sampler player\footnote{Note that we do \emph{not} use any importance sampling correction.}. 
Finally, the $q$-player adapts the distribution to favor examples with higher loss and thus maximize the objective in \eqref{eq:DROLearning}.

\subsection{Sampler (\texorpdfstring{$q$}{}-Player) Algorithm} \label{subs:Sampler}
%From an online optimization perspective, t
In every iteration $t$, the learner player sets a vector of losses through $\theta_t$.
We denote by $L(\theta_t; x_i, y_i) = L_{t,i}$ the loss at time $t$ for example $i$ and by $L(\theta_t) = L_t$ the vector of losses.
The sampler player chooses an index $i_t$ (or a mini-batch) and a vector $q_t$. Then, only $L_{t, i_t}$ is revealed and she suffers a cost $q_t^\top L_t$.
In such setting, the best the player can aim to do is to minimize its \emph{regret}:
\begin{equation}
    \operatorname{SR}_T \coloneqq  \max_{q \in \mathcal{Q}^\alpha} \sum\nolimits_{t=1}^T q^\top L_t - \sum\nolimits_{t=1}^T q_t^\top L_t. \label{eq:SamplerRegret}
\end{equation}
The regret measures how good the sequence of actions of the sampler is, compared to the best single action (i.e., distribution over the data) in hindsight, after seeing the sequence of iterates $L_t$.
The sampler player problem is a linear adversarial bandit. Exploration and sampling in this setting are hard \citep{bubeck2012towards}. Our efficient implementation exploits the specific combinatorial structure.

\looseness -1 In particular, the DRO set $\mathcal{Q}^\alpha$ is a polytope with $\binom{N}{k}$ vertices, each corresponding to a different subset $I$ of size $k$ of the ground set $2^{[N]}$.
As the inner optimization problem over $q$ in \eqref{eq:DROLearning} is a linear program, the optimal solution $q^\star$ is a vertex. %contained in the vertex set. 
Thus, the sampler problem can be reduced to a {\em best subset selection} problem: find the best set among all size-$k$ subsets $\mathcal{I}_k = \left\{ I \subseteq 2^{[N]} \mid |I| = k \right\}$. 
Here, the value of a set $I$ at time $t$ is simply the average of the losses $(1/k)\sum_{i\in I}L_i(\theta_t)$.
The problem of maximizing the value over time $t$ can be viewed as a {\em combinatorial bandit} problem, as we have a combinatorial set of ``arms'', one per $I \in \mathcal{I}_k$ \citep[Chapter 30]{lattimore2018bandit}. 
Building on \citet{alatur2020multi}, we develop an efficient algorithm for the sampler. 

% For our online algorithm, we will, every round, sample a set $I\in \mathcal{I}_k$ and obtain (an unbiased estimate of) its value. This value changes every round, since $\theta_t$ changes. 
%by uniformly at random selecting an $i\in I$ and evaluating $L_i(\theta)$, 
%thus yielding partial (bandit) feedback of the value of $I\in \mathcal{I}_k$.  

\looseness -1 \paragraph{Starting Point: EXP.3} A well known no-regret bandit algorithm is the celebrated EXP.3 algorithm \citep{auer2002nonstochastic}. 
Let $\Delta_{I} \coloneqq \left\{\tilde{W} \in \real[\binom{N}{k}] | \sum_I \tilde{W}_I = 1, \tilde{W}_I \geq 0 \right\}$ be the simplex of distributions over the $\binom{N}{k}$ subsets.
Finding the best distribution $W_I^\star \in \Delta_{I}$ is equivalent to finding the best subset $I^\star \in \mathcal{I}_k$. By transitivity, this is equivalent to finding the best $q^\star \in \mathcal{Q}^\alpha$.
To do this, EXP.3 maintains a vector $W_{I, t} \in \Delta_{I}$, samples an element $I_t \sim W_{I, t}$ and observes a loss associated with element $I_t$.
Finally, it updates the distribution using multiplicative weights. Unfortunately, EXP.3 is {\em intractable in two ways}: Sampling a $k$-subset $I_t$ would require evaluating the losses of $k=\lfloor \alpha N\rfloor$ data points, which is impractical. Furthermore, the naive EXP.3 algorithm is intractable because the dimension of $W_{I,t}$ is {\em exponential} in $k$. 
In turn, the regret of this algorithm also depends on the dimension of $W_{I,t}$.

\paragraph{Efficiency through Structure} The crucial insight is that we can {\em exploit the combinatorial structure of the problem and additivity of the loss} to exponentially improve efficiency.
%to design a highly efficient algorithm for the sampler player.
First, we exploit that weights of individual elements and sets of them are related by $W_{t,I} = \sum_{i\in I}w_{t,i}$.  Thus, instead of observing the loss $L_{I_t}$, we let the $q$-player sample only a {\em single element} $i_t$ uniformly at random from the set $I_t \sim W_{I, t}$, observe its loss $L_{i_t}$, and use it to update a weight vector $w_{t,i}$.
%It turns out that this partial (bandit) feedback is sufficient to find the best $W_I^\star \in \Delta_{I}$. 
The single element $i_t$ sampled by the algorithm provides information about the loss of all $\binom{N-1}{k-1}$ {\em sets} that contain $i_t$. This allows us to obtain regret guarantees that are sub-linear in $N$ (rather than in $N^k$). 
Second, we {\em exponentially improve computational cost} by developing an algorithm that maintains a vector $w \in \real[N]$ and uses \emph{k-Determinantal Point Processes} to map it to distributions over subsets of size $k$. 

\begin{definition}[k-DPP, \citet{kulesza2012determinantal}] \looseness -1 A $k$-Determinantal Point Process over a ground set $N$ is a distribution over all subsets of size $k$ s.t.~the probability of a set is $\P(I) \propto \operatorname{det}(K_I)$,
% \begin{equation*}
%     \P(I) \propto \operatorname{det}(K_I), % = \frac{\operatorname{det}(K_I)}{\sum_{|J|=k} \operatorname{det}(K_J) },
% \end{equation*}
\looseness -1 where $K$ is a positive definite kernel matrix and $K_I$ is the submatrix of $K$ indexed by $I$.\hfill $\blacksquare$
\end{definition}

In particular, we consider k-DPPs with {\em diagonal} kernel matrices $K = \diag{w}$, with $w \in \real[N]_{\geq 0}$ and at least $k$ strictly positive elements. 
This family of distributions is sufficient to contain, for example, the uniform distribution over the $\binom{N}{k}$ subsets and all the vertices of $\mathcal{Q}^\alpha$. 
We use such k-DPPs to {\em efficiently} map a vector of size $N$ to a distribution over $\binom{N}{k}$ subsets.
We also denote the marginal probability of element $i$ by $\P_w(i)$.
It is easy to verify that the vector of marginals $\frac{1}{k}\P_w(i) \in \mathcal{Q}^\alpha$. 
Hence, we directly use the k-DPP marginals as the sampler's decision variables.

% $\frac{1}{k}\P_w(\cdot) \in \mathcal{Q}^\alpha$.
% \footnote{This is true for \emph{any} k-DPP, not just those with diagonal kernels}
% \sj{Say more clearly that the DPP is only used via the marginals, and why we need DPP-marginals. I.e., give the big picture.}

We can finally describe the sampler algorithm.
We initialize the k-DPP kernel with the uniform distribution $w_1 = \mathbf{1}_N$. 
In iteration $t$, the sampler plays the distribution $q_t = \frac{1}{k}\P_{w_t}(\cdot) \in \mathcal{Q}^\alpha$ and samples an element $i_t \sim q_t$.
%\footnote{This is equivalent to first sampling a set $I_t$ from the $k$-DPP, and then $i_t$ uniformly at random from $I_t$.} 
The loss at index $i_t$, $L_{t, i_t}$, is revealed to the sampler and only the index $i_t$ of $w_t$ is updated according to the multiplicative update $w_{t+1, i_t} = w_{t+1, i_t}e^{k L_{t, i_t} / q_{t, i_t}}$. 

% The final algorithm is marked with ``sampler'' in \Cref{alg:Game} and its adaptive nature motivates the final algorithm's name.

This approach addresses the disadvantages of the EXP.3 algorithm. 
Computationally, it only requires $O(N)$ memory.
After sampling every element $i_t$, the distribution over the $\binom{N-1}{k-1}$ sets that contain $i_t$ are updated. This yield rates that depend sub-linearly on the data set size which we prove next.

\begin{lemma} \label{lemma:Sampler}
    Let the sampler player play the \adacvar Algorithm with $\eta_s = \sqrt{\frac{\log N}{N T}}$.
    Then, for any sequence of losses she suffers a regret \eqref{eq:SamplerRegret} of at most $O(\sqrt{TN\log N})$.
\end{lemma}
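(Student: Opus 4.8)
The plan is to reduce the regret of the sampler to that of a standard EXP.3-type multiplicative-weights algorithm run over the $\binom{N}{k}$ subsets, but with the key observation that the per-subset weights factorize additively into the per-element weights maintained by the algorithm. Concretely, I would first make precise the claim stated informally in the text: if $W_{t,I} \propto \prod$ or $\sum$ of the $w_{t,i}$ for $i\in I$ in the right way, then the k-DPP marginal $\P_{w_t}(i)$ equals (up to the $1/k$ normalization) the decision variable $q_{t,i}$ that an EXP.3 instance over subsets would play after marginalizing. For a \emph{diagonal} kernel $K=\diag(w)$, $\det(K_I)=\prod_{i\in I}w_i$, so the k-DPP is exactly a product-form distribution over size-$k$ subsets; its marginals $\P_w(i)$ are elementary-symmetric-polynomial ratios, and crucially the multiplicative update $w_{t+1,i_t}\leftarrow w_{t,i_t}e^{\eta_s \hat L_{t,i_t}}$ on a single coordinate corresponds to the multiplicative-weights update on all $\binom{N-1}{k-1}$ subsets containing $i_t$, with loss estimate $\hat L_{t,I}=\frac1k\sum_{i\in I}\hat L_{t,i}$ — an unbiased estimator of the subset loss $\frac1k\sum_{i\in I}L_{t,i}$ because $\hat L_{t,i}=\frac{L_{t,i_t}}{q_{t,i_t}}\mathbf 1[i=i_t]$ is unbiased for $L_{t,i}$.

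Second, I would invoke the standard EXP.3 / exponential-weights regret bound in the form: for exponential weights over a finite action set $\mathcal I_k$ with unbiased nonnegative loss estimates $\hat L_{t,I}$ and step size $\eta_s$,
\begin{equation*}
\E\Big[\sum_{t=1}^T \langle W_t,\hat L_t\rangle - \min_{I}\sum_{t=1}^T \hat L_{t,I}\Big] \le \frac{\log|\mathcal I_k|}{\eta_s} + \frac{\eta_s}{2}\sum_{t=1}^T \E\big[\langle W_t,\hat L_t^2\rangle\big].
\end{equation*}
Here $\log|\mathcal I_k|=\log\binom{N}{k}\le N\log N$ (or $k\log(N/k)$), which is where the $\sqrt{N\log N}$ rather than $\sqrt{N^k}$ factor enters — this is the payoff of the factorized representation. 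For the variance term, since $i_t\sim q_t$ and $\hat L_{t,i}$ is supported on the single coordinate $i_t$ with value $L_{t,i_t}/q_{t,i_t}\le 1/q_{t,i_t}$, one gets $\E\big[\langle W_t,\hat L_t^2\rangle\big]\lesssim \sum_{i} q_{t,i}\cdot \frac{1}{k^2}\cdot\frac{k\cdot \#\{I\ni i\}/\binom{N}{k}\cdots}{\,}$ — the upshot after bookkeeping with the marginal identity $\sum_i \P_{w_t}(i)=k$ is a bound of order $N$ per round, so $\frac{\eta_s}{2}\sum_t(\cdots)\lesssim \eta_s NT$. Balancing $\frac{\log\binom{N}{k}}{\eta_s}+\eta_s NT$ with $\eta_s=\sqrt{\tfrac{\log N}{NT}}$ gives $O(\sqrt{TN\log N})$. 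Finally I would translate back: $\min_I \sum_t \hat L_{t,I}$ concentrates around $\min_I\sum_t \frac1k\sum_{i\in I}L_{t,i}$, and $\max_{q\in\mathcal Q^\alpha}\sum_t q^\top L_t$ is attained at a vertex, i.e. equals $\max_I \frac1k\sum_{i\in I}\sum_t L_{t,i}$; likewise $\sum_t q_t^\top L_t = \sum_t \langle W_t, \frac1k\sum_{i\in I}L_{t,i}\rangle_I$ by the marginal identity. Hence $\E[\operatorname{SR}_T]\le O(\sqrt{TN\log N})$.

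The main obstacle I anticipate is two-fold and both parts concern the factorization identity and the variance control. First, one must verify rigorously that running single-coordinate multiplicative weights on $w$ and then forming k-DPP marginals genuinely reproduces the exponential-weights \emph{distribution over subsets} with the additive loss $\frac1k\sum_{i\in I}\hat L_{t,i}$ — i.e. that $W_{t,I}\propto \prod_{i\in I} w_{t,i}$ with the claimed relationship $W_{t,I}$ vs. $w_{t,i}$, and that the marginal of this product distribution is exactly what Algorithm~\ref{alg:Game} plays; the text asserts $W_{t,I}=\sum_{i\in I}w_{t,i}$ which is not literally the k-DPP product form, so the bookkeeping linking "EXP.3 over subsets with additive losses" to "k-DPP with $\diag(w)$" needs care (this is presumably where the reference to \citet{alatur2020multi} does the heavy lifting). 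Second, bounding $\E[\langle W_t,\hat L_t^2\rangle]$ requires controlling $\sum_i \big(\sum_{I\ni i}W_{t,I}\big)\cdot \frac{L_{t,i_t}^2}{q_{t,i_t}^2}$-type quantities using only that $q_{t,i}=\frac1k\P_{w_t}(i)$ and $\sum_i\P_{w_t}(i)=k$; getting the clean $O(N)$ per round (and not $O(N/k)$ or $O(Nk)$) is the delicate estimate, and it is what pins down the final $\sqrt{TN\log N}$ rate. I would carry out the identity/factorization step first, then the regret inequality, then the variance estimate, then the translation back to $\operatorname{SR}_T$.
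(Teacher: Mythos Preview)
Your plan is essentially the paper's proof, just with the black box opened. The paper's argument has exactly three pieces: (i) show that the k-DPP marginals $q_t=\tfrac1k\P_{w_t}(\cdot)$ lie in $\mathcal{Q}^\alpha$ (your observation that $\sum_i\P_{w_t}(i)=k$ and $0\le \P_{w_t}(i)\le 1$); (ii) show that the comparator $\max_{q\in\mathcal Q^\alpha}\sum_t q^\top L_t$ equals $\tfrac1k\max_{I\in\mathcal I_k}\sum_t\sum_{i\in I}L_{t,i}$ via the vertex argument you sketch at the end; and (iii) rewrite $\operatorname{SR}_T$ as $\tfrac1k$ times the regret of a $k$-set combinatorial bandit and invoke \citet[Lemma~1]{alatur2020multi} as a black box for the $O(k\sqrt{TN\log N})$ bound. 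You instead re-derive (iii) from the EXP.3 second-moment inequality, which is exactly what Alatur et~al.\ do; so the route is the same, only more self-contained.

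Two bookkeeping points worth tightening in your write-up. First, the factorization: for a diagonal kernel, $\P_{w_t}(I)\propto\prod_{i\in I}w_{t,i}$, and the single-coordinate update $w_{t+1,i_t}=w_{t,i_t}e^{\eta_s\hat L_{t,i_t}}$ induces the subset update $W_{t+1,I}\propto W_{t,I}\,e^{\eta_s\sum_{i\in I}\hat L_{t,i}}$, i.e.\ exponential weights over subsets with the \emph{unnormalized} additive loss $\tilde L_{t,I}=\sum_{i\in I}L_{t,i}$ (the sentence $W_{t,I}=\sum_{i\in I}w_{t,i}$ in the main text is indeed misleading; the product form is what is used). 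Second, the $k$ accounting: with this unnormalized subset loss, $\log\binom{N}{k}\le k\log N$ and the second-moment term is $k\sum_iL_{t,i}^2\le kN$ per round, so the subset-level regret at $\eta_s=\sqrt{\log N/(NT)}$ is $O(k\sqrt{TN\log N})$; the final $\tfrac1k$ from $q_t=\tfrac1k\P_{w_t}(\cdot)$ (and from the comparator identity) is what cancels it. Your sketch asserts ``$O(N)$ per round'' and balances directly to $O(\sqrt{TN\log N})$, which hides this cancellation; make the $\tfrac1k$ step explicit and the argument is complete.
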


\begin{proof}[Proof sketch]
\looseness -1 For a detailed proof please refer to \Cref{proof:Sampler}.
% Here, we just sketch the proof.
% For the iterates $q_t$ of \adacvar, we need the following three facts.
First, we prove in \Cref{prop:KDPP_DRO} that the iterates of the algorithm are effectively in $\mathcal{Q}^\alpha$.
Next, we prove in \Cref{prop:comparator} that the comparator in the regret of \citet{alatur2020multi} and in the sampler regret \eqref{eq:SamplerRegret} have the same value (scaled by $k$). 
Finally, the result follows as a corollary from these propositions and \citet[Lemma 1]{alatur2020multi}.
\end{proof}
% It is instructive to consider the special cases $k=1$ and $k=N$.  For $k=N$, the marginal distribution $q_t$ remains uniform, hence \adacvar reverts back to SGD. For $k=1$, $q_t$ is in fact identical to the standard EXP.3 algorithm applied to singleton sets $I_t=\{i_t\}$.
% \vspace{-1em}

\subsection{Learner (\texorpdfstring{$\theta$-}{}Player) Algorithm} \label{subs:Learner}
Analogous to the sampler player, the learner player seeks to minimize its regret
\begin{equation}
    \operatorname{LR}_T \coloneqq  \sum\nolimits_{t=1}^T q_t^\top L(\theta_t) - \min_{\theta \in \Theta} \sum\nolimits_{t=1}^T q_t^\top L(\theta).\label{eq:LearnerRegret}
\end{equation}
Crucially, the learner can choose $\theta_t$ {\em after} the sampler selects $q_t$.
Thus, the learner can play the Be-The-Leader (BTL) algorithm:
\begin{equation}
    \theta_t = \arg\min_{\theta \in \Theta} \sum\nolimits_{\tau=1}^t q_\tau^\top L(\theta) = \arg\min_{\theta \in \Theta} \bar{q}_t^\top L(\theta), \label{eq:BTL}
\end{equation}
where $\bar{q}_t = \frac{1}{t} \sum_{\tau=1}^t q_\tau$ is the average distribution (up to time $t$) that the sampler player proposes. 

Instead of assuming access to an exact optimizer, we assume to have an ERM oracle available. 
\begin{assumption}[$\epsilon_{\mathrm{oracle}}$-correct ERM Oracle]     \label{assumption:ERM}
    The learner has access to an ERM oracle that takes a distribution $q$ over the dataset as input and outputs $\hat{\theta}$, such that
    \begin{align*}
        q^\top L(\hat{\theta}) \leq \min_{\theta \in \Theta } q^\top L(\theta) + \epsilon_{\mathrm{oracle}}.
    \end{align*} 
\end{assumption}
To implement the ERM Oracle, the learner player must  solve a \emph{weighted} empirical loss minimization in Problem~\eqref{eq:BTL} in every round.
For non-convex problems, this is in general NP-hard \citep{murty1987some}, so
obtaining efficient and provably no-regret guarantees in the non-convex setting seems unrealistic in general.

Despite this hardness, the success of deep learning empirically demonstrates that stochastic optimization algorithms such as SGD are able to find very good (even if not necessarily optimal) solutions for the ERM-like non-convex problems.
Furthermore, SGD on the sequence of samples $\left\{i_\tau \sim q_\tau\right\}_{\tau=1}^t$ approximately solves the BTL problem. 
To see why, we note that such sequence of samples is an unbiased estimator of $\bar{q}_t$ from the BTL algorithm~\eqref{eq:BTL}. 
Then, for the freshly sampled $i_t \sim q_t$, a learner that chooses $\theta_{t} \coloneqq \theta_{t-1} - \eta_l \nabla L_{i_t}(\theta_{t-1})$ is (approximately) solving the BTL algorithm with SGD. 
% We note that this is slightly more computationally expensive than SGD because it requires two forward passes to compute $L_{i_t}(\theta_{t-1})$ and $L_{i_t}(\theta_{t})$ and one backwards pass to compute $\nabla L_{i_t}(\theta_{t-1})$. 

\begin{lemma} \label{lemma:Learner}
A learner player that plays the BTL algorithm with access to an ERM oracle as in \Cref{assumption:ERM}, achieves at most $\epsilon_{\mathrm{oracle}}T$ regret.
\end{lemma}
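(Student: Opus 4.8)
The plan is to prove the regret bound for the Be-The-Leader (BTL) strategy by a standard telescoping/induction argument, adapted to account for the $\epsilon_{\mathrm{oracle}}$ slack in the oracle. Recall the learner regret is
\[
    \operatorname{LR}_T = \sum_{t=1}^T q_t^\top L(\theta_t) - \min_{\theta \in \Theta} \sum_{t=1}^T q_t^\top L(\theta),
\]
and the learner plays $\theta_t$ as an $\epsilon_{\mathrm{oracle}}$-approximate minimizer of $\sum_{\tau=1}^t q_\tau^\top L(\theta) = t\,\bar q_t^\top L(\theta)$. The key classical fact (the ``Be-The-Leader lemma'') is that if one could play the \emph{exact} leader $\theta_t^\star \in \arg\min_\theta \sum_{\tau=1}^t q_\tau^\top L(\theta)$, then $\sum_{t=1}^T q_t^\top L(\theta_t^\star) \le \min_\theta \sum_{t=1}^T q_t^\top L(\theta)$, i.e.\ the exact-leader regret is non-positive.

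First I would prove this exact-leader fact by induction on $T$. The base case $T=1$ is immediate since $\theta_1^\star$ minimizes $q_1^\top L(\cdot)$. For the inductive step, assume $\sum_{t=1}^{T-1} q_t^\top L(\theta_t^\star) \le \sum_{t=1}^{T-1} q_t^\top L(\theta_T^\star)$ (using the inductive hypothesis with comparator $\theta_T^\star$); adding $q_T^\top L(\theta_T^\star)$ to both sides gives $\sum_{t=1}^{T} q_t^\top L(\theta_t^\star) \le \sum_{t=1}^{T} q_t^\top L(\theta_T^\star) = \min_\theta \sum_{t=1}^T q_t^\top L(\theta)$, as desired.

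Next I would handle the oracle error. Since $\theta_t$ satisfies $\sum_{\tau=1}^t q_\tau^\top L(\theta_t) \le \sum_{\tau=1}^t q_\tau^\top L(\theta_t^\star) + \epsilon_{\mathrm{oracle}}$, one can either (a) rerun the induction carrying the accumulated slack, or (b) more simply bound, for each $t$,
\[
    q_t^\top L(\theta_t) = \Big(\sum_{\tau=1}^t q_\tau^\top L(\theta_t) - \sum_{\tau=1}^{t-1} q_\tau^\top L(\theta_t)\Big)
    \le \Big(\sum_{\tau=1}^t q_\tau^\top L(\theta_t^\star) + \epsilon_{\mathrm{oracle}}\Big) - \sum_{\tau=1}^{t-1} q_\tau^\top L(\theta_{t-1}^\star),
\]
where the last inequality uses optimality of $\theta_{t-1}^\star$ for the length-$(t-1)$ sum. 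Summing over $t=1,\dots,T$ telescopes the exact-leader terms, leaving $\sum_{t=1}^T q_t^\top L(\theta_t) \le \sum_{\tau=1}^T q_\tau^\top L(\theta_T^\star) + \epsilon_{\mathrm{oracle}} T = \min_\theta \sum_{t=1}^T q_t^\top L(\theta) + \epsilon_{\mathrm{oracle}} T$, which rearranges to $\operatorname{LR}_T \le \epsilon_{\mathrm{oracle}} T$.

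I do not expect a serious obstacle here — this is a clean, well-known argument — but the one point requiring care is the bookkeeping of which comparator is used at each induction step (one must invoke the inductive hypothesis with the \emph{final} leader $\theta_T^\star$, not an arbitrary one) and making sure the $\epsilon_{\mathrm{oracle}}$ slack is added exactly once per round rather than compounding. A secondary subtlety worth a remark is that this lemma says nothing about \emph{how} the oracle is implemented (SGD on the sampled sequence only approximately realizes it); the lemma is stated purely in terms of \Cref{assumption:ERM}, so the SGD discussion in the surrounding text is heuristic motivation and not part of this proof.
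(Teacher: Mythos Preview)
Your proposal is correct and follows essentially the same approach as the paper: the paper also proves the bound by induction on $T$, working directly with the approximate iterates $\theta_t$ (your option (a)) rather than first establishing the exact-leader lemma and then adding the slack separately. Your telescoping variant (b) is an equivalent unrolling of the same induction.
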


\begin{proof}
    See \Cref{proof:Learner}.
\end{proof}

For convex problems, we know that it is not necessary to solve the BTL problem~\eqref{eq:BTL}, and algorithms such as online projected gradient descent \citep{zinkevich2003online} achieve no-regret guarantees. As shown in \Cref{app:ConvexLearner}, the learner suffers $\operatorname{LR}_T = O(\sqrt{T})$ regret by playing SGD in convex problems.
\subsection{Guarantees for CVaR Optimization}

\label{subs:Convergence}
Next, we show that if both players play the no-regret algorithms discussed above, they solve the game~\eqref{eq:DROLearning}. Using $J(\theta, q) \coloneqq q^\top L(\theta)$, 
the minimax equilibrium of the game is the point $(\theta^\star, q^\star)$ such that  $\forall \theta\in \Theta,q\in \mathcal{Q}^\alpha;~J(\theta^\star, q) \leq J(\theta^\star, q^\star) \leq J(\theta, q^\star) $.
We assume that this point exists (which is guaranteed, e.g., when the sets $\mathcal{Q}^\alpha$ and $\Theta$ are compact).
The game regret is
\begin{equation}
    \operatorname{GameRegret}_T \coloneqq \sum \nolimits_{t=1}^T J(\theta_t, q^\star) - J(\theta^\star, q_t).\label{eq:GameRegret}
\end{equation}
\begin{theorem}[Game Regret] \label{thm:no-regret}
    Let $L_i(\cdot): \Theta \to [0, 1]$, $i=\left\{1, ..., N \right\}$ be a fixed set of loss functions.
    If the sampler plays \adacvar and the learner plays the oracle-BTL algorithm, then the game has regret $O(\sqrt{TN\log N} + \epsilon_{\mathrm{oracle}}T)$.
\end{theorem}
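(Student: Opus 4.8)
The plan is the standard reduction of minimax-game solving to per-player regret minimization \citep{freund1999adaptive}: I will show that $\operatorname{GameRegret}_T \le \operatorname{SR}_T + \operatorname{LR}_T$ and then invoke \Cref{lemma:Sampler} and \Cref{lemma:Learner}. Write $L_t \coloneqq L(\theta_t)$. Inserting $\pm \sum_{t=1}^T q_t^\top L_t$ into the definition \eqref{eq:GameRegret} splits the game regret as
\[
  \operatorname{GameRegret}_T = \underbrace{\sum_{t=1}^T \big((q^\star)^\top L_t - q_t^\top L_t\big)}_{(\mathrm{I})} + \underbrace{\sum_{t=1}^T \big(q_t^\top L(\theta_t) - q_t^\top L(\theta^\star)\big)}_{(\mathrm{II})}.
\]
Because $q^\star \in \mathcal{Q}^\alpha$, term $(\mathrm{I})$ is at most $\max_{q\in\mathcal{Q}^\alpha}\sum_{t=1}^T q^\top L_t - \sum_{t=1}^T q_t^\top L_t = \operatorname{SR}_T$, the sampler regret of \eqref{eq:SamplerRegret}; because $\theta^\star \in \Theta$, term $(\mathrm{II})$ is at most $\sum_{t=1}^T q_t^\top L(\theta_t) - \min_{\theta\in\Theta}\sum_{t=1}^T q_t^\top L(\theta) = \operatorname{LR}_T$, the learner regret of \eqref{eq:LearnerRegret}. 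Both inequalities are immediate from the definition of the equilibrium $(\theta^\star, q^\star)$, whose existence is assumed.

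Next I would bound each piece using the lemmas already proved. \Cref{lemma:Sampler}, applied with $\eta_s = \sqrt{\log N/(NT)}$, gives $\operatorname{SR}_T = O(\sqrt{TN\log N})$; \Cref{lemma:Learner} gives $\operatorname{LR}_T \le \epsilon_{\mathrm{oracle}} T$ since the learner plays oracle-BTL. Summing the two bounds yields $\operatorname{GameRegret}_T \le \operatorname{SR}_T + \operatorname{LR}_T = O(\sqrt{TN\log N} + \epsilon_{\mathrm{oracle}} T)$, as claimed.

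The only delicate point is the use of \Cref{lemma:Sampler} in bounding $(\mathrm{I})$: the loss sequence faced by the sampler, $L_t = L(\theta_t)$, is not fixed in advance but is generated adaptively, since $\theta_t$ depends on the realized sample $i_t \sim q_t$ and hence on the sampler's own randomness. This is not an obstacle, because the EXP.3-style argument behind \Cref{lemma:Sampler} bounds the \emph{realized} regret for an arbitrary loss sequence, as long as $L_t$ is revealed only after $q_t$ has been committed --- exactly the timing of \Cref{alg:Game}, where the learner updates $\theta_{t-1}\to\theta_t$ and the loss estimate $\hat L_t$ is formed only after $i_t$ has been drawn from $q_t$. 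Everything else --- the telescoping decomposition and the membership facts $q^\star\in\mathcal{Q}^\alpha$, $\theta^\star\in\Theta$ --- is routine.
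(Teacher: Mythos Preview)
Your proof is correct and follows essentially the same approach as the paper: add and subtract $\sum_t q_t^\top L_t$, bound the two pieces by $\operatorname{SR}_T$ and $\operatorname{LR}_T$ via the membership facts $q^\star\in\mathcal{Q}^\alpha$, $\theta^\star\in\Theta$, and invoke \Cref{lemma:Sampler} and \Cref{lemma:Learner}. Your explicit remark on the adaptivity of the sampler's loss sequence is a welcome addition that the paper leaves implicit.
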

% \vspace{-1em}
\begin{proof}[Proof sketch] We bound the Game regret with the sum of the learner and sampler regret and use the results of \Cref{lemma:Learner} and \Cref{lemma:Sampler}. For a detailed proof please refer to \Cref{proof:no-regret}.
\end{proof}
% \vspace{-1em}
%
This immediately implies our main theoretical result, namely a performance guarantee for the solution obtained by \adacvar for the central problem of minimizing the empirical CVaR~\eqref{eq:DROLearning}.
%of any random iterate $\theta_t$ w.r.t.~Problem~\eqref{eq:DROLearning}.
\begin{corollary}[Online to Batch Conversion] \label{cor:excess-cvar}
Let $L_i(\cdot): \Theta \to [0, 1]$, $i=\left\{1, ..., N \right\}$ be a set of loss functions sampled from a distribution $\mathcal{D}$.
Let $\theta^\star$ be the minimizer of the CVaR of the empirical distribution $\hat{\CVar}^\alpha$.
Let $\bar{\theta}$ be the output of \adacvar, selected uniformly at random from the sequence $\left\{\theta_t\right\}_{t=1}^T$.
Its expected excess CVaR is bounded as:
\begin{equation*}
    \mathbb{E} \hat{\CVar}^\alpha[L(\bar{\theta})] \leq  \hat{\CVar}^\alpha[L(\theta^\star)] + O(\sqrt{N\log N / T}) + \epsilon_{\mathrm{oracle}}
\end{equation*}
where the expectation is taken w.r.t.~the randomization in the algorithm, both for the sampling steps and the randomization in choosing $\bar{\theta}$.
\end{corollary}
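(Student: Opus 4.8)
The plan is to read the result off the game-regret bound of \Cref{thm:no-regret} by an online-to-batch argument, once the value of the game in \eqref{eq:DROLearning} has been identified with the empirical CVaR. First I would record the empirical counterpart of \eqref{eq:DRO}: for every $\theta$, $\hat{\CVar}^\alpha[L(\theta)] = \max_{q \in \mathcal{Q}^\alpha} q^\top L(\theta) = \max_{q} J(\theta, q)$. Using only the saddle-point inequalities $J(\theta^\star, q) \le J(\theta^\star, q^\star) \le J(\theta, q^\star)$ that define the equilibrium, the left one gives $\max_q J(\theta^\star, q) = J(\theta^\star, q^\star)$ and the right one gives $\max_q J(\theta, q) \ge J(\theta, q^\star) \ge J(\theta^\star, q^\star)$ for every $\theta$. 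Hence $\theta^\star$ also minimizes $\theta \mapsto \hat{\CVar}^\alpha[L(\theta)]$, so it coincides with the CVaR minimizer named in the statement, and $\hat{\CVar}^\alpha[L(\theta^\star)] = J(\theta^\star, q^\star) =: V = \min_{\theta}\max_q J(\theta,q)$.

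Next I would run the online-to-batch step. Since $\bar\theta$ is uniform over $\{\theta_t\}_{t=1}^T$, $\E_{\bar\theta}\,\hat{\CVar}^\alpha[L(\bar\theta)] = \tfrac1T\sum_{t=1}^T \hat{\CVar}^\alpha[L(\theta_t)] = \tfrac1T\sum_{t=1}^T \max_q J(\theta_t, q)$, and I would try to pass to $\tfrac1T\max_q\sum_t J(\theta_t, q)$ and chain: the sampler's regret \eqref{eq:SamplerRegret} gives $\max_q \sum_t q^\top L(\theta_t) \le \sum_t q_t^\top L(\theta_t) + \operatorname{SR}_T$; the learner's regret \eqref{eq:LearnerRegret} with $\theta^\star\in\Theta$ gives $\sum_t q_t^\top L(\theta_t) \le \sum_t q_t^\top L(\theta^\star) + \operatorname{LR}_T$; and $q_t \in \mathcal{Q}^\alpha$ together with Step~1 gives $q_t^\top L(\theta^\star) \le \max_q q^\top L(\theta^\star) = V$ for each $t$. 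Thus $\tfrac1T\max_q\sum_t J(\theta_t,q) \le V + (\operatorname{SR}_T + \operatorname{LR}_T)/T$, which is the Game-regret bound of \Cref{thm:no-regret} rescaled. Substituting $\operatorname{SR}_T = O(\sqrt{TN\log N})$ from \Cref{lemma:Sampler} and $\operatorname{LR}_T \le \epsilon_{\mathrm{oracle}}T$ from \Cref{lemma:Learner}, taking expectations over the sampler's internal randomization (which enters the chain linearly), and dividing by $T$ then produces the claimed $\hat{\CVar}^\alpha[L(\theta^\star)] + O(\sqrt{N\log N/T}) + \epsilon_{\mathrm{oracle}}$.

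The gap between $\tfrac1T\sum_t\max_q J(\theta_t,q)$ — what a uniformly random iterate incurs — and $\tfrac1T\max_q\sum_t J(\theta_t,q)$ — what the sampler's regret actually controls — is the real obstacle, since that inequality runs the wrong way in general. It closes cleanly when the $L_i$ are convex: then $\theta\mapsto\hat{\CVar}^\alpha[L(\theta)]$ is convex, being the composition of the convex, coordinatewise-nondecreasing map $L\mapsto\max_q q^\top L$ with $L(\cdot)$, so I would instead output the running average $\bar\theta=\tfrac1T\sum_t\theta_t$ and invoke Jensen to get $\hat{\CVar}^\alpha[L(\bar\theta)]\le \tfrac1T\sum_t\hat{\CVar}^\alpha[L(\theta_t)]\le \tfrac1T\max_q\sum_t q^\top L(\theta_t)$, closing the loop deterministically (so the expectation in the statement is only over the sampler's randomization). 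In the non-convex, ERM-oracle regime one has to lean on \Cref{assumption:ERM} instead: the BTL learner exactly best-responds to $\bar q_t$ each round while the sampler's averaged play converges to $q^\star$, which should drive $\max_q J(\theta_t,q)$ itself toward $V$; turning this into a quantitative statement that still matches the $O(\sqrt{N\log N/T})$ rate is where the substantive work lies.
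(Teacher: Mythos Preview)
Your approach is essentially the paper's: identify $\hat{\CVar}^\alpha[L(\theta)]=\max_{q\in\mathcal{Q}^\alpha}J(\theta,q)$, bound the excess CVaR by the duality gap, and control the latter by the averaged sampler plus learner regrets, then plug in \Cref{lemma:Sampler} and \Cref{lemma:Learner}. The chain you write is exactly the one the paper uses (it cites \citet{agarwal2018learning}, Theorem~3).

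The gap you isolate is real, and the paper does not close it either. The paper's proof shows, for any \emph{fixed} $q$, that
\[
\E\bigl[J(\bar\theta,q)\bigr]=\E\Bigl[\tfrac1T\sum_{t}J(\theta_t,q)\Bigr]\le J(\theta^\star,q^\star)+\tfrac1T(\operatorname{SR}_T+\operatorname{LR}_T),
\]
and then asserts that this ``holds in particular for $\bar q^\star\coloneqq\arg\max_q J(\bar\theta,q)$''. But $\bar q^\star$ depends on the random index defining $\bar\theta$, so the first equality $\E[J(\bar\theta,\bar q^\star)]=\E[\tfrac1T\sum_t J(\theta_t,\bar q^\star)]$ fails: the left side is $\E[\tfrac1T\sum_t\max_q J(\theta_t,q)]$, the right side is not even well-defined for a $\tau$-dependent $q$. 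This is precisely the $\tfrac1T\sum_t\max_q$ versus $\tfrac1T\max_q\sum_t$ swap you flag.

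Your convex remedy (output the average $\bar\theta=\tfrac1T\sum_t\theta_t$ and use Jensen on the convex map $\theta\mapsto\max_q q^\top L(\theta)$) is the standard fix and is correct; the paper does not make this distinction. For the non-convex, oracle-BTL regime, neither you nor the paper provides a rigorous argument controlling $\sum_t\max_q J(\theta_t,q)$; your honest acknowledgment that ``this is where the substantive work lies'' is more accurate than the paper's silent substitution of $\bar q^\star$.
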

\begin{proof}[Proof sketch]
For a detailed proof please refer to \Cref{proof:excess-cvar}.
The excess CVaR is bounded by the duality gap, which in turn is upper-bounded by the average game regret.
% In \Cref{thm:no-regret} we proved that the Game Regret is sublinear, hence the average excess CVaR tends to zero.
\end{proof}
% In the convex case, no randomization is necessary and we return the average iterate. In practice for both the convex and non-convex setting, our experiments use the last iterate.

\begin{corollary}[Population Guarantee] \label{cor:excess-population-cvar}
Let $L(\cdot): \Theta \to [0, 1]$ be a Random Variable induced by the data distribution $\mathcal{D}$.
Let $\theta^\star$ be the minimizer of the CVaR at level $\alpha$ of such Random Variable.
Let $\bar{\theta}$ be the output of \adacvar, selected uniformly at random from the sequence $\left\{\theta_t\right\}_{t=1}^T$.
Then, with probability at least $\delta$ the expected excess CVaR of $\bar{\theta}$ is bounded as:
\begin{equation*}
    \mathbb{E} \CVar^\alpha[L(\bar{\theta})] \leq  \CVar^\alpha[L(\theta^\star)] + O(\sqrt{N\log N / T}) + \epsilon_{\mathrm{oracle}} + \epsilon_{\mathrm{stat}}
\end{equation*}
where $\epsilon_{\mathrm{stat}} = \tilde{O}(\frac{1}{\alpha}\sqrt{\frac{1}{N}})$ comes from the statistical error and the expectation is taken w.r.t.~the randomization in the algorithm, both for the sampling steps and the randomization in choosing $\bar{\theta}$.
\end{corollary}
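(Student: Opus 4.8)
The plan is to prove this by a standard empirical-to-population sandwich that glues the in-sample guarantee of \Cref{cor:excess-cvar} onto the uniform convergence bound of \Cref{prop:uniform-convergence}. I will write $\hat\theta^\star \coloneqq \arg\min_{\theta\in\Theta}\hat\CVar^\alpha[L(\theta)]$ for the empirical CVaR minimizer (the object appearing in \Cref{cor:excess-cvar}) and keep $\theta^\star$ for the population CVaR minimizer as in the statement. Let $\mathcal{E}$ be the event, occurring with probability at least $1-\delta$ over the draw of the dataset, on which $\sup_{\theta\in\Theta}\bigl|\hat\CVar^\alpha[L(\theta)]-\CVar^\alpha[L(\theta)]\bigr|\le\epsilon_{\mathrm{stat}}$ with $\epsilon_{\mathrm{stat}}=\tilde O\!\bigl(\tfrac1\alpha\sqrt{1/N}\bigr)$; this is exactly \Cref{prop:uniform-convergence} together with its finite-VC-dimension extension mentioned after it.

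Conditioning on $\mathcal{E}$ and holding the dataset fixed, all remaining expectations are taken only over the internal randomness of \adacvar (the sampling steps and the uniform choice of $\bar\theta$). The chain I would carry out is: (i) apply the upper half of the uniform bound pointwise and take expectations, $\E\,\CVar^\alpha[L(\bar\theta)]\le\E\,\hat\CVar^\alpha[L(\bar\theta)]+\epsilon_{\mathrm{stat}}$; (ii) invoke \Cref{cor:excess-cvar}, which is stated conditionally on a fixed set of loss functions, to get $\E\,\hat\CVar^\alpha[L(\bar\theta)]\le\hat\CVar^\alpha[L(\hat\theta^\star)]+O(\sqrt{N\log N/T})+\epsilon_{\mathrm{oracle}}$; (iii) use optimality of $\hat\theta^\star$ for the empirical CVaR, $\hat\CVar^\alpha[L(\hat\theta^\star)]\le\hat\CVar^\alpha[L(\theta^\star)]$; (iv) apply the lower half of the uniform bound at the fixed point $\theta^\star$, $\hat\CVar^\alpha[L(\theta^\star)]\le\CVar^\alpha[L(\theta^\star)]+\epsilon_{\mathrm{stat}}$. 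Concatenating these four inequalities gives, on $\mathcal{E}$,
\[
\E\,\CVar^\alpha[L(\bar\theta)]\le\CVar^\alpha[L(\theta^\star)]+O(\sqrt{N\log N/T})+\epsilon_{\mathrm{oracle}}+2\epsilon_{\mathrm{stat}},
\]
and since $2\epsilon_{\mathrm{stat}}$ is $\tilde O\!\bigl(\tfrac1\alpha\sqrt{1/N}\bigr)$ up to constants, it is absorbed into $\epsilon_{\mathrm{stat}}$, yielding the claimed bound with probability at least $1-\delta$.

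The part that needs care is not any single inequality but the bookkeeping of the two distinct sources of randomness: the event $\mathcal{E}$ is a statement about the random dataset, whereas the $O(\sqrt{N\log N/T})$ term is the expectation over the algorithm's internal randomization from \Cref{cor:excess-cvar}, so one must condition on $\mathcal{E}$ first and only then take the algorithmic expectation — the order matters because $\bar\theta$ is data-dependent and the uniform bound is precisely what lets step (i) survive that dependence. A minor secondary point is that the internally used DRO level is $k/N=\lfloor\alpha N\rfloor/N$ rather than exactly $\alpha$; the resulting $O(1/N)$ gap between $\hat\CVar^{k/N}$ and $\hat\CVar^{\alpha}$ is lower order and can be folded into $\epsilon_{\mathrm{stat}}$. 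Everything else is the routine four-term sandwich above.
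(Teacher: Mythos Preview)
Your proposal is correct and matches the paper's own proof essentially line for line: the paper applies \Cref{prop:uniform-convergence} at $\bar\theta$, then \Cref{cor:excess-cvar} (absorbing your steps (ii) and (iii) into one), then \Cref{prop:uniform-convergence} again at $\theta^\star$, arriving at the same $2\epsilon_{\mathrm{stat}}$ overhead. Your version is in fact more careful than the paper's, since you explicitly distinguish $\hat\theta^\star$ from $\theta^\star$ and spell out the conditioning on the high-probability event before taking the algorithmic expectation.
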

\begin{proof}[Proof sketch]
For a detailed proof please refer to \Cref{proof:excess-population-cvar}.
We bound the statistical error using \Cref{prop:uniform-convergence} and the optimization error is bounded using \Cref{cor:excess-cvar}.
\end{proof}

\looseness -1 It is instructive to consider the special cases $k=1$ and $k=N$. For $k=N$, $q_t$ remains uniform and \adacvar reduces to SGD. For $k=1$, the sampler simply plays standard EXP.3 over data points and \adacvar reduces to the algorithm of \citet{shalev2016minimizing} for the max loss.

\section{Experiments} \label{sec:Experiments}
In our experimental evaluation, we compare \adacvar on both convex (linear regression and classification) and non-convex (deep learning) tasks.  In addition to studying how it performs in terms of the CVaR and empirical risk on the training and test set, we also investigate to what extent it can help guard against distribution shifts. In \Cref{app:ExperimentalSetup}, we detail the experimental setup.
We provide an open-source implementation of our method, which is available at \url{http://github.com/sebascuri/adacvar}.
% :\vspace{-2mm}
%\begin{itemize}
%    \item How does \adacvar perform on {\em convex} stochastic learning problems (Experiment \ref{exp:convex})?\vspace{-2mm}
%    \item How does \adacvar fare under \emph{distribution shifts} in convex settings (Experiment \ref{exp:convex-dist})?\vspace{-2mm}
%    \item How does \adacvar perform in \emph{non-convex} deep-learning benchmarks (Experiment \ref{exp:deep-cvar})?\vspace{-2mm}
%    \item How does \adacvar performance deteriorates for different levels of \emph{distribution shifts} in \emph{non-convex} deep-learning benchmarks (Experiment \ref{exp:deep-robust})?
%\end{itemize}
%
\paragraph{Baseline Algorithms} \looseness -1
We compare our adaptive sampling algorithm (\adacvar) to three baselines:
first, an i.i.d.~sampling scheme that optimizes Problem~\eqref{eq:BadLearning} using the truncated loss (\trunkcvar);
second, an i.i.d.~sampling scheme that uses a smoothing technique to relax the $\sum_{i}[x_i]_{+}$ non-linearity (\softcvar). 
\citet{tarnopolskaya2010cvar} compare different smoothing techniques for the $\sum_{i}[x_i]_{+}$ non-linearity. 
Of these, we use the relaxation $T \log(\sum_{i}e^{x_i/T})$ proposed by \citet{nemirovski2006convex}. In each iteration, we heuristically approximate the population sum with a mini-batch.
Third, we also compare a standard i.i.d.~sampling ERM scheme that stochastically minimizes the average of the losses (\mean).

\subsection{Convex CVaR Optimization} \label{exp:convex}
\looseness -1 We first compare the different algorithms in a controlled convex setting, where the classical \trunkcvar algorithm is expected to perform well. 
We consider three UCI regression data sets, three synthetic regression data sets, and eight different UCI classification data sets \citep{Dua:2019}.
The left and middle plots in \Cref{fig:convex_summary} present a summary of the results (see \Cref{tab:ConvexCvar} in \Cref{app:DetailedExperiments} for a detailed version). We evaluate the CVaR ($\alpha=0.01$) and average loss for linear regression and the accuracy, and CVaR and average surrogate loss for classification (logistic regression) on the test set.
In linear regression, \adacvar performs comparably or better to benchmarks in terms of the CVaR of the loss and is second best in terms of the average loss.
In classification, \trunkcvar performs better in terms of the CVaR for the {\em surrogate loss} but performs {\em poorly} in terms of accuracy. 
This is due to the fact that it learns a predictive distribution that is close to uniform.
\adacvar has a comparable accuracy to ERM (\mean algorithm) but a much better CVaR. 
Hence, it finds a good predictive model while successfully controlling the prediction risk.
\begin{figure}[t]
    % \vspace{-1em}
    \includegraphics[width=\columnwidth]{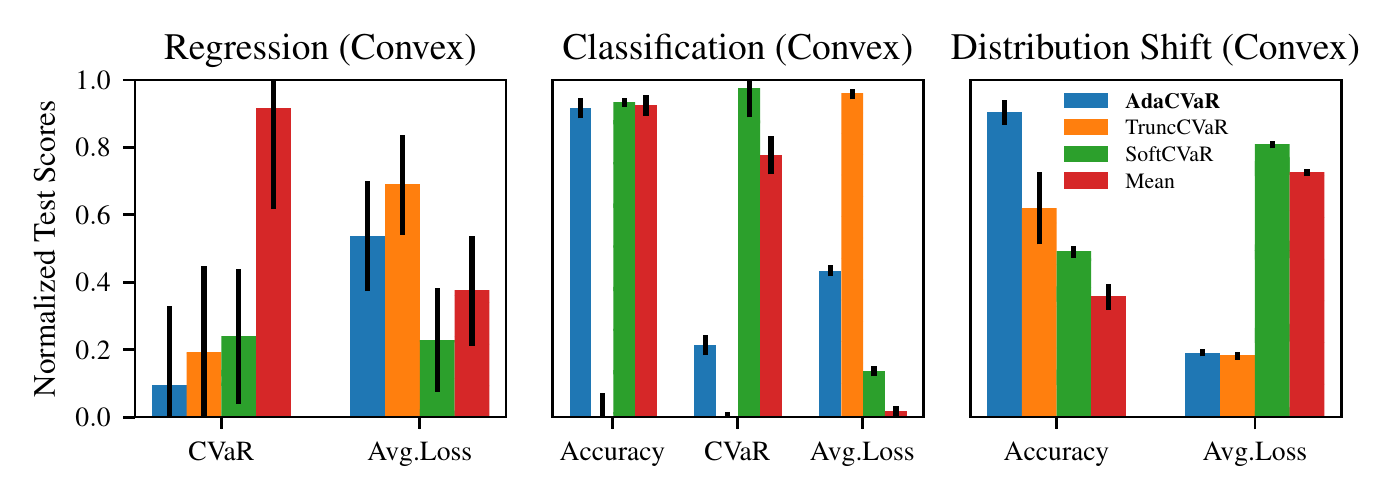}
    % \vspace{-2em}
    \caption{\looseness -1 
    Scores are normalized between 0 and 1 to compare different data sets.
    \textit{Left}: Linear regression tasks. \adacvar has lower CVaR than benchmarks.
    \textit{Middle}: Binary classification (logistic regression) tasks. \adacvar obtains the same accuracy as \mean and \softcvar with lower CVaR. \trunkcvar outputs an approximately uniform distribution yielding low CVaR but poor predictive accuracy.
    \textit{Right}: Binary classification (logistic regression) tasks with train/test 90\% distribution shift.
    \adacvar has the highest test accuracy and low average surrogate loss.}
    % \vspace{-1em}
    \label{fig:convex_summary}
\end{figure}
\subsection{Convex CVaR Distributional Robustness} \label{exp:convex-dist}
\looseness -1 We use the same classification data sets and classifiers as in Section \ref{exp:convex}.
To produce the distribution shift, we randomly sub-sample the majority class in the training set, so that the new training set has a 10\%/90\% class imbalance and the majority/minority classes are inverted. 
The test set is kept unchanged. 
Such shifts in class frequencies are common \citep[Section 3.2]{mehrabi2019survey}.

\looseness -1 We consider $\alpha=0.1$, which is compatible with the data imbalance.
The right plot in \Cref{fig:convex_summary}
shows a summary of the results (See \Cref{tab:ShiftTest} in \Cref{app:DetailedExperiments} for detailed results).
\adacvar has higher test accuracy than the benchmarks and is comparable to \trunkcvar on average log-likelihood.

\looseness -1 We note that the CVaR provides robustness with respect to worst-case distribution shifts. 
Such a \emph{worst case} distribution might be too pessimistic to be encountered in practice, however.
Instead, \adacvar appears to benefit from the varying distributions during training
and protects better against non-adversarial distribution shifts. 
Other techniques for dealing with imbalanced data might also be useful to address this distribution shift empirically, but are only useful if there is an a-priori knowledge of the class ratio in the test set.
Instead, the CVaR optimization guards against any distribution shift. 
Furthermore, with such a-priori knowledge, such techniques can also be used together with \adacvar. 
We provide extended experiments analyzing distribution shift in \Cref{app:exp:dist-shift}.

% This behavior guards against worst-case distribution shifts, where uniform predictions minimize large losses (negative log-likelihood) on small sets of examples that are difficult to predict.

\subsection{Non-Convex (Deep Learning) CVaR Optimization} \label{exp:deep-cvar}
\begin{figure}[t]
    \includegraphics[width=\columnwidth]{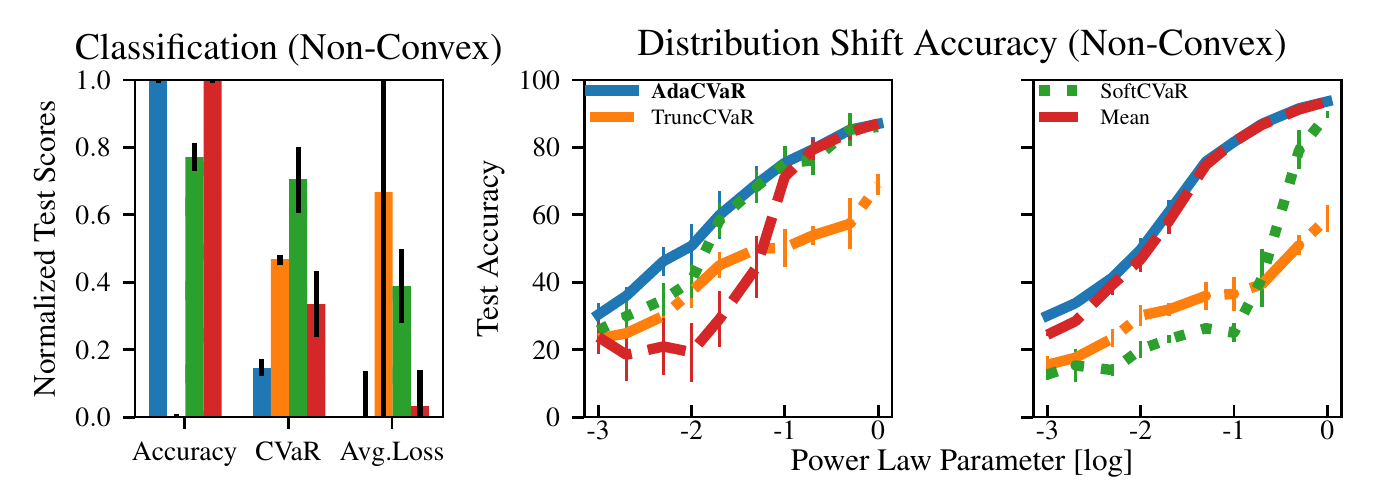}
    % \vspace{-2em}
    \caption{Non Convex Optimization tasks. 
    \textit{Left}: Normalized scores in image classification tasks. \adacvar attains state-of-the-art accuracy and lowest CVaR.
    \textit{Middle and Right}: Test accuracy under train/test distribution shift on CIFAR-10 for VGG16-BN (middle) and ResNet-18 (right). 
    Lower $\beta$ indicates larger shift.
    \adacvar has always better test accuracy than benchmarks.
    }
    % \vspace{-1em}
    \label{fig:non_convex_summary}
\end{figure}

\looseness -1 We test our algorithm on common non-convex optimization benchmarks in deep learning (MNIST, Fashion-MNIST, CIFAR-10). 
As it is common in these setting, we perform data-augmentation on the training set. 
Thus, the effective training set size is infinite. 
To address this, we consider a mixture of distributions in a similar spirit as \citet{borsos2019online}. 
Each data point serves as a representative of a distribution over all its possible augmentations. 
We optimize the CVaR of this mixture of distributions as a surrogate of the CVaR of the infinite data set.
The left plot in \Cref{fig:non_convex_summary} summarizes the results (See \Cref{tab:VisionTest} in \Cref{app:DetailedExperiments}).
\adacvar reaches the same accuracy as ERM in all cases and has lower CVaR. 
Only in CIFAR-10 it does not outperform \trunkcvar in terms of the CVaR of the surrogate loss. 
This is because the \trunkcvar yields a predictive model that is close to uniform.
Instead, \adacvar still yields useful predictions while controlling the CVaR.

\paragraph{Gradient Magnitude and Training Time} \looseness -1 The gradients of \trunkcvar are either 0 or $1/\alpha$ times larger than the gradients of the same point using \mean.
A similar but smoothed phenomenon arises with \softcvar. This makes training these losses considerably harder due to exploding gradients and noisier gradient estimates. 
With the same learning rates, these algorithms usually produce numerical overflows and, to stabilize learning, we used considerably smaller learning rates. 
In turn, this increased the number of iterations required for convergence. 
\adacvar does not suffer from this as the gradients have the same magnitude as in \mean.
For example, to reach 85 \% \emph{train} accuracy \adacvar requires 7 epochs, \mean 9, \softcvar 21, and \trunkcvar never surpassed 70 \% train accuracy. 
There was no significant difference between time per epoch of each of the algorithms.

\subsection{Distributional Robustness in Deep Learning through Optimizing the CVaR} \label{exp:deep-robust}
\looseness -1 Lastly, we demonstrate that optimizing the CVaR yields improved robustness to distribution shifts in deep learning.
We simulate distribution shift through mismatching training and test class frequencies. Since we consider multi-class problems, we simulate power-law class frequencies, which are commonly encountered in various applications \citep{clauset2009power}. More specifically, we sub-sample each class of the training set of CIFAR-10 so that the class size follows a power-law distribution $p(|c|) \propto c^{\log{\beta}}$, where $|c|$ is the size of the $c$-th class and keep the test set unchanged. 
In middle and right plots of \Cref{fig:non_convex_summary}, we show the test accuracy for different values of $\beta$ for VGG16-BN and ResNet-18 networks.
The algorithms do not know a-priori the amount of distribution shift to protect against and consider a fixed $\alpha=0.1$.
For all distribution shifts, \adacvar is superior to the benchmarks.  

When a high-capacity network learns a perfectly accurate model, then the average and CVaR of the loss distribution have both zero value. 
This might explain the similarity between \mean and \adacvar for ResNet-18. 
Instead, there is a stark discrepancy between \adacvar and \mean in VGG16. 
This shows the advantage of training in a risk averse manner, particularly when the model makes incorrect predictions due to a strong inductive bias.

\section{Conclusions}
\looseness -1 The CVaR is a natural criterion for training ML models in a risk-aware fashion. %, guarding against certain kinds of distribution shifts. 
As we saw, the traditional way of optimizing it via truncated losses fails for modern machine learning tasks due to high variance of the gradient estimates.
Our novel adaptive sampling algorithm \adacvar exploits the distributionally robust optimization formulation of the CVaR, and tackles it via regret minimization.
It naturally enables the use of standard stochastic optimization approaches (e.g., SGD), applied to the marginal distribution of a certain k-DPP.
%Furthermore, we provide an efficient implementation for the adaptive sampling algorithm based on DPPs.
Finally, we demonstrate in a range of experiments that \adacvar is superior to the \trunkcvar algorithm for regression and classification tasks, both in convex and non-convex learning settings.
Furthermore, \adacvar provides higher robustness to (non-adversarial) distribution shifts than \trunkcvar, \softcvar, or \mean algorithms.

\section*{Broader Impact}
Increasing {\em reliability} is one of the central challenges when deploying machine learning in high-stakes applications. We believe our paper makes important contributions to this endeavor by going beyond simply optimizing the {\em average} performance, and considering risk in deep learning.
The CVaR is also known to be an avenue towards enforcing {\em fairness constraints} in data sets \citep{williamson2019fairness}.
Hence, our algorithm also contributes to optimizing fair deep models, by counteracting inherent biases in the data (e.g., undersampling of certain parts of the population). 

%Nevertheless, as with any fairness constrained optimization problem, the fairness criteria must be aligned with the decision maker goal. 
%If formulated incorrectly, the CVaR optimization could amplify biases. 

\begin{ack}
This project has received funding from the European Research Council (ERC) under the European Unions Horizon 2020 research, innovation programme grant agreement No 815943, a DARPA YFA award, and NSF CAREER award 1553284. 
\end{ack}

% \section*{References}
% \bibliographystyle{unsrt}
\newpage
\bibliography{bibliography}

\newpage
\appendix
\section{Approximate Sampling from k-DPP Marginals} \label{subs:Sampling}

The final piece of the \adacvar algorithm is how to efficiently compute the marginal distribution $\P_w(i)$ of the k-DPP model, and how to sample from it. 

\paragraph{Cost of sampling}
%State-of-the-art exact sampling methods for k-DPPs take at least $O(N \operatorname{poly}(k))$ operations
%using rejection sampling \citep{derezinski2019exact}, whereas approximate methods that use MCMC have mixing times of $O(Nk)$ \citep{li2016fast,anari2016monte}.
Compared to general k-DPPs, our setting has the crucial advantage that the kernel matrix is diagonal. Thus there is no need for performing an expensive eigendecomposition of the kernel matrix which is required in the general case.
The marginals of diagonal k-DPPs are 
\begin{align}
\P_w(i) = w_i\frac{e^{k-1}_{-i}}{e^k_N},  \label{eq:exact_marginals}
\end{align}
where ${e^k_N} = \sum_{|I|=k} \prod_{i \in I} w_i$ is the elementary symmetric polynomial of size $k$ for the ground set $[N]$ and $e^{k-1}_{-i}$ is the elementary symmetric polynomial of size $k-1$ for the ground set $[N]\setminus i$.
%\paragraph{Large-Scale Approximation}
Unfortunately, naively computing the elementary symmetric polynomials has a complexity of $O(N^2k) = O(\alpha N^3)$ using \citep[Algorithm 7]{kulesza2012determinantal}.
%Using a specialized binary tree algorithm %for this data-structure 
%takes $O(N\log(N)k^2) = \tilde{O}(\alpha^2 N^3)$.
Even if this computation could be performed fast, exact computation of the elementary symmetric polynomials is numerically unstable.

\looseness -1 In view of this, \citet{barthelme2019asymptotic} propose an approximation to k-DPPs valid for large-scale ground sets which has better numerical properties.
%They show empirically that for $N=200$, computing the marginals \eqref{eq:Marginals} leads to numerical overflow.
Their main idea is to relax the sample size constraint of the k-DPP with a soft constraint such that the \emph{expected} sample size of the matched DPP is $k$.
The total variation distance between the marginal probabilities of the k-DPP and DPP decays as $O(1/N)$.
The marginal probabilities of this matched DPP are simply
\begin{equation}
    \hat{\P}_w(i) = \frac{w_i e^\nu}{1+w_i e^\nu}, \label{eq:approx_marginals}
\end{equation}
where $\nu$ softly enforces the sample size constraint $\sum_{i=1}^N \frac{w_i e^\nu}{1+w_i e^\nu} = k$.
For a given $\nu$, we can efficiently sample from this singleton-marginal distribution, which takes $O(\log(N))$ using the same sum-tree data-structure as \citet{shalev2016minimizing}. 

\begin{proposition}[Excess Regret of Approximate Sampler]
Let $\tilde{q}$ be the marginals of the approximate k-DPP \eqref{eq:approx_marginals} and $q$ the exact marginals of the k-DPP \eqref{eq:exact_marginals}. Then, a sampler that plays \adacvar using the approximate k-DPPs marginals suffers an extra regret of $\epsilon_{\mathrm{approx}}T$, with $\epsilon_{\mathrm{approx}} = O(1/N)$, for large enough $N$.
\end{proposition}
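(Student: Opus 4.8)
The plan is to decompose the regret of the approximate sampler into the regret of the EXP.3-type analysis evaluated along the (perturbed) weight trajectory, plus a controllable ``playing'' discrepancy, and to bound the latter with the marginal-closeness guarantee of \citet{barthelme2019asymptotic}. Let $w_t$ be the weight vector \adacvar maintains when run with the approximate marginals, let $\tilde q_t = \tfrac1k\hat\P_{w_t}(\cdot)$ be the distribution it plays at round $t$, and let $q_t = \tfrac1k\P_{w_t}(\cdot)$ be the \emph{exact} k-DPP marginals of the \emph{same} weights; both vectors lie in $\mathcal{Q}^\alpha$ since their entries are in $[0,1/k]$ and sum to $1$. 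Adding and subtracting $\sum_t q_t^\top L_t$ in the sampler regret \eqref{eq:SamplerRegret},
\begin{align*}
\widetilde{\operatorname{SR}}_T &= \Big(\max_{q\in\mathcal{Q}^\alpha}\sum_t q^\top L_t - \sum_t q_t^\top L_t\Big) + \sum_t (q_t-\tilde q_t)^\top L_t \\
&=: \mathrm{(I)} + \mathrm{(II)},
\end{align*}
where $\mathrm{(I)}$ is the regret one would incur by playing the exact marginals of $w_t$, and $\mathrm{(II)}$ is the extra cost of playing the approximation instead.

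For $\mathrm{(II)}$: since $L_{t,i}\in[0,1]$, $(q_t-\tilde q_t)^\top L_t \le \|q_t-\tilde q_t\|_1 = 2\,\mathrm{TV}(q_t,\tilde q_t)$, and \citet{barthelme2019asymptotic} show this total variation distance is $O(1/N)$ uniformly in the weights; hence $\mathrm{(II)} \le T\cdot O(1/N) = \epsilon_{\mathrm{approx}}T$ with $\epsilon_{\mathrm{approx}}=O(1/N)$, exactly the claimed extra regret. For $\mathrm{(I)}$: I would verify that the analysis behind \Cref{lemma:Sampler}, namely \citet[Lemma 1]{alatur2020multi}, still gives $O(\sqrt{TN\log N})$ along this trajectory. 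The multiplicative-weights potential (the log of the size-$k$ elementary symmetric polynomial of $w_t$) produces precisely the exact marginals $q_t$, so the potential argument bounding $\mathrm{(I)}$ goes through verbatim; the only change is the sampling law, which enters through the unbiased estimator $\hat L_{t,i}=\frac{L_{t,i}}{\tilde q_{t,i}}[[i=i_t]]$. Unbiasedness, $\E_{i_t\sim\tilde q_t}[\hat L_{t,i}]=L_{t,i}$, still holds, and the second-moment terms of the EXP.3 bound differ from those of the exact sampler only through the $O(1/N)$ marginal discrepancy, so for $N$ large they remain of the same order. Combining the two pieces gives $\widetilde{\operatorname{SR}}_T = O(\sqrt{TN\log N}) + \epsilon_{\mathrm{approx}}T$, i.e.\ an extra $\epsilon_{\mathrm{approx}}T$ over the bound of \Cref{lemma:Sampler}.

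The main obstacle is term $\mathrm{(I)}$ — certifying that the variance terms in the EXP.3 regret are not inflated by the approximate sampling. Those terms scale like $\sum_i q_{t,i}\,L_{t,i}^2/\tilde q_{t,i}$, so a bare total-variation bound on the marginals is not quite enough: what is really needed is a \emph{relative} bound $\tilde q_{t,i}\ge(1-O(1/N))\,q_{t,i}$, uniform over $i$ and over reachable $w_t$. The clean route is to extract such a uniform multiplicative approximation of the inclusion probabilities by comparing the matched-DPP marginals \eqref{eq:approx_marginals} to the exact k-DPP marginals \eqref{eq:exact_marginals} directly (rather than only through their total variation); given that, each per-round variance term changes by at most a $(1+O(1/N))$ factor and the bound closes, leaving the extra regret at $\epsilon_{\mathrm{approx}}T=O(T/N)$.
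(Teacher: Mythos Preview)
Your decomposition $\widetilde{\operatorname{SR}}_T = \mathrm{(I)} + \mathrm{(II)}$ and your bound on $\mathrm{(II)}$ via H\"older and the total-variation result of \citet{barthelme2019asymptotic} are exactly the paper's argument. The paper's proof in fact stops right there: it writes $\widetilde{\operatorname{SR}}_T \le \operatorname{SR}_T + \epsilon_{\mathrm{approx}}T$ and reads the proposition as asserting only the additive $\epsilon_{\mathrm{approx}}T$ term, taking $\operatorname{SR}_T$ to be the regret of playing the exact marginals of the current weights, without re-certifying that this quantity still obeys the $O(\sqrt{TN\log N})$ bound of \Cref{lemma:Sampler} along the perturbed trajectory.

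Your analysis of term $\mathrm{(I)}$ therefore goes beyond what the paper proves for this proposition. Your observation is well taken: because the weight updates now use samples drawn from $\tilde q_t$ rather than $q_t$, the second-moment terms in the EXP.3 analysis become $\sum_i q_{t,i}L_{t,i}^2/\tilde q_{t,i}$, and a bare total-variation bound does not immediately control the ratio $q_{t,i}/\tilde q_{t,i}$; one would want a uniform multiplicative comparison of the inclusion probabilities. The paper simply does not engage with this point, so relative to the paper your proof is at least as complete, and your identification of the remaining obstacle is a genuine refinement rather than a gap in your argument.
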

\begin{proof}
Let $\widetilde{\operatorname{SR}}_T$ be the regret of the player that selects the approximate k-DPPs marginals. Then,
\begin{align*}
    \widetilde{\operatorname{SR}}_T &\coloneqq  \max_{q \in \mathcal{Q}^\alpha} \sum_{t=1}^T q^\top L_t - \sum_{t=1}^T \tilde{q}_t^\top L_t \\
    &= \operatorname{SR}_T + \sum_{t=1}^T (\tilde{q}_t - q_t)^\top L_t \\
    &\leq \operatorname{SR}_T + \sum_{t=1}^T \| \tilde{q}_t - q_t \|_1 \| L_t\|_\infty \\
    &\leq  \operatorname{SR}_T + \sum_{t=1}^T \| \tilde{q}_t - q_t \|_1 \\ 
    &\leq  \operatorname{SR}_T + \epsilon_{\mathrm{approx}} T
\end{align*}
where the first inequality holds due to H\"older's inequality, the second inequality due to $L \in [0, 1]$, and finally noticing that $\| \tilde{q}_t - q_t \|_1$  is proportional to the total variation distance. Using Theorem 2.1 from \citet{barthelme2019asymptotic} the results follow.
\end{proof}

We note that this result implies an an extra $O(1/N)$ term in the excess CVaR bounds from \Cref{cor:excess-cvar,cor:excess-population-cvar}. This is fast compared to statistical $O(1/\sqrt{N})$ rates. 
For small $N$, the exact marginals can be computed efficiently so there is no need for the approximation.
\section{Omitted Proofs} 
\subsection{Proof of Proposition \ref{prop:uniform-convergence}} \label{proof:uniform-convergence}

\newtheorem*{prop_repeat}{Proposition \ref{prop:uniform-convergence}}
\begin{prop_repeat} %\label{prop:uniform-convergence}
    Let $h: \mathcal{X} \to \mathcal{Y}$ be a function class with finite VC-dimension $|\mathcal{H}|$.
    Let $L(h): \mathcal{H} \to [0, 1]$ be a random variable.
    Then, for any $0 < \alpha \leq 1$, with probability at least $1-\delta$, 
    \begin{align*} 
        \E\left[ \sup_{h \in \mathcal{H}} \left| \widehat{\CVar}^\alpha [L(h)] - \CVar^\alpha[L(h)] \right| \right] \leq \frac{1}{\alpha}\sqrt{\frac{\log(2 |\mathcal{H}| /\delta)}{N}}.
    \end{align*}
\end{prop_repeat}

\begin{proof}[Proof for \Cref{prop:uniform-convergence}]
    \citet{brown2007large} proves that the following two inequalities hold jointly with probability $1-\delta$, $\delta \in (0, 1]$ for a single $h \in \mathcal{H}$: 
    \begin{align*}
        \CVar^\alpha(L(h)) &\geq \widehat{\CVar}^\alpha (L(h)) - \frac{1}{\alpha}\sqrt{\frac{\log(2/\delta)}{N}} \\ 
        \CVar^\alpha(L(h)) &\leq \widehat{\CVar}^\alpha (L(h)) + \sqrt{\frac{5\log(6/\delta)}{\alpha N}}
    \end{align*}
    Taking the union bound over all $h \in \mathcal{H}$: 
    \begin{align*}
        \CVar^\alpha(L(h)) &\geq \widehat{\CVar}^\alpha (L(h)) - \frac{1}{\alpha}\sqrt{\frac{\log(2|\mathcal{H}|/\delta)}{N}} \\ 
        \CVar^\alpha(L(h)) &\leq \widehat{\CVar}^\alpha (L(h)) + \sqrt{\frac{5\log(6|\mathcal{H}|/\delta)}{\alpha N}}
    \end{align*}
    The theorem follows from taking the maximum between lower and upper bounds. 
    
    If $\mathcal{H}$ is a non-finite class we follow a standard argument with bounded norm of real valued functions, i.e., $\|\mathcal{H}\|_{\infty} \leq B$, we rely on the standard argument based on covering numbers.
    To define such covering, we fix $\epsilon > 0$ and consider a set $\mathcal{C}_{\mathcal{H}, \epsilon}$ of minimum cardinality, such that for all $h \in \mathcal{H}$, there exists an $h' \in \mathcal{C}_{\mathcal{H}, \epsilon}$, satisfying $|h - h'| \leq \epsilon$. Define the covering number as $\mathcal{N}_{\mathcal{H},\epsilon} = |\mathcal{C}_{\mathcal{H}, \epsilon}|$, then taking the union bound for $\epsilon=1\sqrt{N}$ we arrive to the result 
    \begin{align*} 
        \E\left[ \sup_{h \in \mathcal{H}} \left| \widehat{\CVar}^\alpha [L(h)] - \CVar^\alpha[L(h)] \right| \right] \leq \frac{1}{\alpha}\sqrt{\frac{\log(2 \mathcal{N}_{\mathcal{H},\epsilon} /\delta)}{N}}.
    \end{align*}
    
    Finally, the logarithm of the covering number can be upper bounded using the pseudo-VC dimension of the function class, for a proof see \citet[Chapter 29]{devroye2013probabilistic}.
    
\end{proof}

\subsection{Proof of Lemma \ref{lemma:Sampler}} \label{proof:Sampler}
\newtheorem*{lemma_sampler_repeat}{Lemma \ref{lemma:Sampler}}

\begin{lemma_sampler_repeat}
Let the sampler player play the \kexpthree Algorithm with $\eta = \sqrt{\frac{\log N}{N T}}$.
Then, she suffers a sampler regret \eqref{eq:SamplerRegret} of at most $O(\sqrt{TN\log N})$.
\end{lemma_sampler_repeat}

In order to prove this, we need to first show that \kexpthree is a valid algorithm for the sampler player. This we do next. 
\begin{proposition} \label{prop:KDPP_DRO}
    The marginals of any k-DPP with a diagonal matrix kernel $K = \diag(w)$ are in the set 
    $\mathcal{Q}^\alpha_k = \left \{ k q \in \real[N] \mid q \in \mathcal{Q}^\alpha \right \} $.  
\end{proposition}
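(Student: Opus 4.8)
The plan is to reduce the claim to the two defining constraints of $\mathcal{Q}^\alpha$. After the rescaling by $k$, membership $p \in \mathcal{Q}^\alpha_k$ is by definition equivalent to asking that $0 \le p_i \le 1$ for every $i$ and $\sum_{i=1}^N p_i = k$, since $\mathcal{Q}^\alpha = \{q \in \real[N] \mid 0 \le q_i \le \tfrac1k,\ \sum_i q_i = 1\}$. So it suffices to verify these two facts for the marginal vector $p_i = \P_w(i)$ of a k-DPP with kernel $K = \diag(w)$, assuming (as stipulated in the definition of the family) that $w$ has at least $k$ strictly positive entries, so that $e^k_N = \sum_{|I|=k}\prod_{j\in I}w_j > 0$ and the k-DPP is well defined.

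The first condition is immediate: $\P_w(i)$ is, by definition of a k-DPP, the probability that element $i$ belongs to the random set drawn from the process, hence lies in $[0,1]$. Alternatively, from the closed form $\P_w(i) = w_i e^{k-1}_{-i}/e^k_N$ in \eqref{eq:exact_marginals}, nonnegativity is clear since all $w_j \ge 0$, and $\P_w(i) \le 1$ follows from the decomposition $e^k_N = w_i e^{k-1}_{-i} + e^{k}_{-i}$ (split the size-$k$ subsets according to whether they contain $i$), together with $e^{k}_{-i} \ge 0$, where $e^{k}_{-i}$ denotes the degree-$k$ elementary symmetric polynomial on the ground set $[N]\setminus\{i\}$.

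For the second condition I would use linearity of expectation. A k-DPP is supported entirely on subsets of size exactly $k$, so for the random set $I$ we have $\sum_{i=1}^N \mathbf{1}\{i \in I\} = |I| = k$ with probability one; taking expectations gives $\sum_{i=1}^N \P_w(i) = \E|I| = k$. Equivalently, one can argue purely algebraically from \eqref{eq:exact_marginals}: expanding $\sum_{i=1}^N w_i e^{k-1}_{-i}$, every size-$k$ subset $I$ contributes $\prod_{j\in I}w_j$ exactly once for each of its $k$ elements, hence $\sum_{i=1}^N w_i e^{k-1}_{-i} = k\,e^k_N$, and dividing by $e^k_N$ yields $\sum_i \P_w(i) = k$.

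Combining the two facts, the marginal vector $p$ satisfies $0 \le p_i \le 1$ and $\sum_i p_i = k$, i.e.\ $p \in \mathcal{Q}^\alpha_k$, equivalently $\tfrac1k\P_w(\cdot) \in \mathcal{Q}^\alpha$, which is exactly the statement. I do not anticipate a real obstacle here; the proof is essentially bookkeeping around the definition of a k-DPP. The only points deserving a word of care are the well-definedness caveat (at least $k$ strictly positive weights), which the statement already builds in, and the observation that the diagonal structure of $K$ is not actually needed for this proposition — it is only exploited later for computational efficiency — so I would state the argument so that it makes clear where (or rather, that nowhere) diagonality is used.
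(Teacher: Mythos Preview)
Your proposal is correct and follows essentially the same approach as the paper: the paper writes $\P_w(i) = \sum_{I \ni i} \P_w(I)$, observes this lies in $[0,1]$ because it is a probability, and then swaps the order of summation to get $\sum_i \P_w(i) = \sum_I \P_w(I)\sum_{i\in I}1 = k$, which is precisely your linearity-of-expectation argument. Your alternative algebraic derivations via elementary symmetric polynomials and your remark that diagonality plays no role are correct additions the paper does not make explicit, but the core argument is the same.
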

\begin{proof}
    For any $w \in \real[N]_{\geq 0}$ the marginals of the k-DPP with kernel $K = \diag(w)$ are:
    \begin{equation}
        \P_w(i) = \sum_{I \ni i} \P_w(I) = \frac{ \sum_{I \ni i} \prod_{i' \in I} w_{i'} }{\sum_{I} \prod_{i' \in I} w_{i'}}. \label{eq:marginals}
    \end{equation}
    From \cref{eq:marginals}, clearly $0 \leq \P_w(i) \leq 1$. Summing \cref{eq:marginals} over $i$ we get:
    \begin{equation*}
        \sum_i \P_w(i) = \sum_i \sum_{I \ni i} \P_w(I) = \sum_{I} \P_w(I) \sum_{i \in I}1  = k
    \end{equation*}
    This shows that $\P_w(i) \in \mathcal{Q}^\alpha_k$.
\end{proof}

\begin{proposition} \label{prop:comparator}
    Let $\tilde{L}_I = \sum_{i \in I} L_i$. 
    Let $\Delta = \left\{ \tilde{w} \in \real[\binom{N}{k}] \mid, 0 \leq w_I \leq 1, \sum_{I} w_I = 1 \right\}$ the set of distributions over the $\binom{N}{k}$ subsets of size $k$ of the ground set $[N]$. 
    
    \begin{equation}
        \max_{q \in \mathcal{Q}^\alpha} \sum_{t=1}^T q^\top L_t = \max_{\tilde{w} \in \Delta } \frac{1}{k} \sum_{t=1}^T \tilde{w} \tilde{L}_I \label{eq:comparator}
    \end{equation}
\end{proposition}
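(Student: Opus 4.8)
The plan is to reduce both sides to the same scalar, namely $\max_{|I|=k}\tfrac1k\sum_{i\in I}\bar L_i$, where $\bar L \coloneqq \sum_{t=1}^T L_t \in \real[N]$ is the accumulated loss vector. Both maximizations are of a \emph{linear} functional over a polytope, so in each case the optimum is attained at an extreme point, and the only real work is to identify those extreme points on the left-hand side.

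First I would treat the left-hand side. Since $\sum_{t=1}^T q^\top L_t = q^\top \bar L$, the problem is $\max_{q\in\mathcal{Q}^\alpha} q^\top \bar L$. The key structural fact is that the polytope $\mathcal{Q}^\alpha=\{q\in\real[N]: 0\le q_i\le \tfrac1k,\ \sum_i q_i=1\}$ has exactly the scaled indicators $\{\tfrac1k\mathbf 1_I : I\subseteq[N],\ |I|=k\}$ as its vertices (with $(\mathbf 1_I)_i=1$ iff $i\in I$); this is the capped/uniform-matroid simplex. To prove it, note that at a vertex the equality $\sum_i q_i=1$ is active, so $N-1$ additional box constraints must be active, i.e. at least $N-1$ coordinates lie in $\{0,\tfrac1k\}$; if $m$ of these equal $\tfrac1k$ and the (at most one) remaining free coordinate is $q_j$, feasibility forces $q_j=1-\tfrac mk\in[0,\tfrac1k]$, hence $m\in\{k-1,k\}$, and in either case exactly $k$ coordinates equal $\tfrac1k$ and the rest are $0$, using that $k=\lfloor\alpha N\rfloor$ is a positive integer. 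Since a linear functional over a polytope is maximized at a vertex, $\max_{q\in\mathcal{Q}^\alpha} q^\top\bar L = \max_{|I|=k}\tfrac1k\,\mathbf 1_I^\top\bar L=\max_{|I|=k}\tfrac1k\sum_{i\in I}\bar L_i$.

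Next I would handle the right-hand side, which is easier. Writing $\tilde L_{t,I}\coloneqq\sum_{i\in I}L_{t,i}$, the set $\Delta$ is the standard simplex over the index set $\mathcal I_k$, whose vertices are the coordinate vectors $e_I$. The objective $\tilde w\mapsto \tfrac1k\sum_{t=1}^T\sum_{I}\tilde w_I\,\tilde L_{t,I}$ is linear in $\tilde w$, hence maximized at some $e_I$, with value $\tfrac1k\sum_{t=1}^T\tilde L_{t,I}=\tfrac1k\sum_{t=1}^T\sum_{i\in I}L_{t,i}=\tfrac1k\sum_{i\in I}\bar L_i$. Therefore the right-hand side equals $\max_{|I|=k}\tfrac1k\sum_{i\in I}\bar L_i$, which coincides with the value obtained for the left-hand side, and the identity follows.

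The only nontrivial step is the vertex characterization of $\mathcal{Q}^\alpha$; once that is established, everything else is immediate from linearity. An alternative to the constraint-counting argument would be to verify the two inclusions directly — every $\tfrac1k\mathbf 1_I$ lies in $\mathcal{Q}^\alpha$, and conversely every $q\in\mathcal{Q}^\alpha$ is a convex combination of such points (a Birkhoff-type decomposition of the capped simplex) — which again reduces the linear maximization to the finite set $\mathcal I_k$; I would present whichever route is cleanest in the final write-up.
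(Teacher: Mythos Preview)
Your proposal is correct and follows essentially the same approach as the paper: both arguments observe that each side is a linear program over a polytope, identify the vertices of $\mathcal{Q}^\alpha$ as the scaled indicators $\tfrac{1}{k}\mathbf{1}_I$ and those of $\Delta$ as the coordinate vectors $e_I$, and conclude that the two optima coincide. The only differences are cosmetic: you supply a constraint-counting argument for the vertex characterization of $\mathcal{Q}^\alpha$ (which the paper simply asserts) and compute the common optimal value $\max_{|I|=k}\tfrac{1}{k}\sum_{i\in I}\bar L_i$ directly, whereas the paper reaches the same conclusion via a short contradiction on the optimal index sets.
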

\begin{proof} 
    Both left and right sides of \eqref{eq:comparator} are linear programs over a convex polytope, hence the solution is in one of its vertices \citep{murty1983linear}.
    The vertices of $\mathcal{Q}^\alpha$ are vectors $\frac{1}{k}\mathbf{1}_I = \frac{1}{k}[[i \in I]]$.
    These vectors have $\frac{1}{k}$ in coordinate $i$ if the coordinate belongs to set $I$ and 0 otherwise.
    The vertices of the simplex are just $[[I]]$, one for coordinate $I$.
    
    Let $q^\star = \frac{1}{k}\mathbf{1}_{I^\star}$ be the solution of the l.h.s.~of \eqref{eq:comparator}.
    Assume that $\hat{I} \neq I^\star$ is the solution of the right hand side. This implies that $\tilde{L}_{\hat{I}} \geq \tilde{L}_{I^\star}$. Therefore, $\sum_{i \in \hat{I}} L_i \geq \sum_{i \in I^\star} L_i$. This in turn implies that $\mathbf{1}_{\hat{I}} L_i \geq \mathbf{1}_{I^\star} L_i$, which contradicts the first predicate. In the case the equalities hold, then the values l.h.s~and r.h.s.~ of equation \eqref{eq:comparator} are also equal. 
\end{proof}

\begin{proof}[Proof of \Cref{lemma:Sampler}]
    \begin{align*}
        \operatorname{SR}_T &= \max_{q \in \mathcal{Q}^\alpha} \sum_{t=1}^T q^\top L_t - \sum_{t=1}^T q_t^\top L_t \\ 
        &= \max_{\tilde{w} \in \Delta } \frac{1}{k} \sum_{t=1}^T \tilde{w} \tilde{L}_I - \sum_{t=1}^T q_t^\top L_t \\
        &= \frac{1}{k} \left( \max_{\tilde{w} \in \Delta} \sum_{t=1}^T \tilde{w} \tilde{L}_I - \sum_{t=1}^T \sum_{i} \P_{w_t}(i) L_i \right)\\
        &= \frac{1}{k} \left( \max_{\tilde{w} \in \Delta} \sum_{t=1}^T \tilde{w} \tilde{L}_I - \sum_{t=1}^T \sum_{I} \P_{w_t}(I) \tilde{L}_I \right)\\
        &\leq O(\sqrt{NT\log(N)}) \tag*{\qedhere}
    \end{align*}
    The first equality uses \Cref{prop:comparator}. The second equality uses \Cref{prop:KDPP_DRO} and the fact that the iterates $q_t$ come from the \kexpthree algorithm. 
    The third equality uses the definition of $\tilde{L}$. 
    The final inequality is due to \citet[Lemma 1]{alatur2020multi}.
\end{proof}

% \begin{proposition} \label{prop:Equivalence}
%     For any sequence of losses, the expected sequence of iterates of k.EXP.3 with learning rate $\eta' = \frac{1}{k} \eta$ matches the sequence of iterates of Add.EXP-3 with learning rate $\eta$.
% \end{proposition}
% \begin{proof}
%     We proceed by induction. For $t=0$, the uniform distribution over sets of size $k$ and the distribution $\kDPP(w_0)$ are equivalent.
%     Assume that the distributions $\tilde{w}_t$ and $\kDPP(w_t)$ match. The inductive hypothesis is $\P_{w_{t}}(I) = \prod_{i' \in I} w_{t, i} = \tilde{w}_{t, I}$. 
%     At time $t$, \Cref{alg:Sampler} and EXP3 sample an index $i_t$ and observe a loss $L_{t, i_t}$. 
    
%     The update rule of \Cref{alg:Sampler} is:
%     \begin{align*}
%         \hat{L}_{t} =  \frac{L_{t, i_t}}{\sum_{I \ni i} \prod_{i' \in I} w_{t, i}} [[i == i_t]], \quad 
%         w_{t+1} = w_{t} e^{\eta \hat{L}_{t}}.
%     \end{align*}
%     Likewise, the EXP3 update rule is: 
%     \begin{align*}
%         \hat{\tilde{L}}_{t} = \frac{L_{t, i_t}}{\sum_{I\ni i} \tilde{w}_{t, I} }  [[I \ni i_t]], \quad 
%         \tilde{w}_{{t+1}} =  \tilde{w}_{t} e^{k \eta \hat{\tilde{L}}_{t} } .
%     \end{align*}
%     Using the inductive hypothesis it immediate that $\hat{\tilde{L}}_{t, I} = \hat{L}_{t, i_t}$ if $i_t\in I$ and $\hat{\tilde{L}}_{t, I} = 0$ otherwise. Finally, 
%     \begin{align*}
%         \P_{w_{t+1}}(I) &= \prod_{i \in I} {w_{t+1, i}} = \prod_{i \in I} w_{t, i} e^{\eta \hat{L}_{t, i}} = \tilde{w}_{t, I} e^{\eta k \hat{L}_{t, I}} \\
%         &= \tilde{w}_{{t+1}, I}. \qedhere
%     \end{align*}
% \end{proof}

\subsection{Proof of Lemma \ref{lemma:Learner}} \label{proof:Learner}
\newtheorem*{lemma_learner_repeat}{Lemma \ref{lemma:Learner}}
\begin{lemma_learner_repeat}
A learner player that plays the BTL algorithm with access to an ERM oracle as in \Cref{assumption:ERM}, achieves at most $\epsilon_{\mathrm{oracle}}T$ regret.
\end{lemma_learner_repeat}

\begin{proof}
    We proceed by induction.
    Clearly for $T=1$, $\operatorname{LR}_{1} \leq \epsilon_{\mathrm{oracle}}$.
    Assume true for $T-1$, the inductive hypothesis is $\operatorname{LR}_{T-1} \leq\epsilon_{\mathrm{oracle}} (T-1)$.
    The regret at time $T$ is:
    \begin{align*}
    	&\operatorname{LR}_{T} = \sum_{t=1}^{T} q_t^\top L(\theta_t) - \min_{\theta \in \Theta} \sum_{t=1}^{T} q_t^\top L(\theta), \\
    	&\leq \sum_{t=1}^{T} q_t^\top \left[L(\theta_t) -  L(\theta_T) \right] + \epsilon_{\mathrm{oracle}}\\ &= \sum_{t=1}^{T-1} q_t^\top \left[L(\theta_t) -  L(\theta_T) \right] + \epsilon_{\mathrm{oracle}}\\
    	&\leq \operatorname{LR}_{T-1} + \epsilon_{\mathrm{oracle}} \leq \epsilon_{\mathrm{oracle}}T \tag*{\qedhere}
    \end{align*}
    
    The first inequality is due to the definition of the oracle, the second inequality is by definition of the minimum in the learner regret, and the final inequality is due to the inductive hypothesis.
\end{proof}

\subsection{Proof of Theorem \ref{thm:no-regret}} \label{proof:no-regret}

\newtheorem*{thm_no-regret_repeat}{Theorem \ref{thm:no-regret}}

\begin{thm_no-regret_repeat}
\label{thm:no-regret-rep}
    Let $L_i(\cdot): \Theta \to [0, 1]$, $i=\left\{1, ..., N \right\}$ be a fixed set of loss functions.
    If the sampler player uses \adacvar and the learner player uses the BTL algorithm, then the game has regret $O(\sqrt{TN\log N})$.
\end{thm_no-regret_repeat}
\begin{proof} We decompose bound the game regret into the sum of player and sampler regret. 
    To bound the regret, we bound it with the sum of the Learner and Sampler regret as follows:
    \begin{align*}
        \operatorname{GameRegret}_T &= \sum\nolimits_{t=1}^T J(\theta_t, q^\star) - J(\theta^\star, q_t), \\
        &\leq \max_{q \in \mathcal{Q}^\alpha} \sum\nolimits_{t=1}^T J(\theta_t, q) - J(\theta^\star, q_t), \\ 
        (\text{\Cref{lemma:Learner}})\; &\leq \max_{q \in \mathcal{Q}^\alpha} \sum\nolimits_{t=1}^T J(\theta_t, q) - J(\theta_t, q_t), \\
        (\text{\Cref{lemma:Sampler}})\;  &\leq O(\sqrt{TN\log N} + \epsilon_{\mathrm{oracle}}T).  \tag*{\qedhere}
    \end{align*}
\end{proof}

\subsection{Proof of Corollary \ref{cor:excess-cvar}} \label{proof:excess-cvar}
\newtheorem*{cor_excess-cvar_repeat}{Corollary \ref{cor:excess-cvar}}
\begin{cor_excess-cvar_repeat}[Online to Batch Conversion] \label{cor:excess-cvar-rep}
Let $L_i(\cdot): \Theta \to [0, 1]$, $i=\left\{1, ..., N \right\}$ be a set of loss functions sampled from a distribution $\mathcal{D}$.
Let $\theta^\star$ be the minimizer of the CVaR of the empirical distribution $\hat{\CVar}^\alpha$.
Let $\bar{\theta}$ be the output of \adacvar, selected uniformly at random from the sequence $\left\{\theta_t\right\}_{t=1}^T$.
Its expected excess CVaR is bounded as:
\begin{equation*}
    \mathbb{E} \hat{\CVar}^\alpha[L(\bar{\theta})] \leq  \hat{\CVar}^\alpha[L(\theta^\star)] + + O(\sqrt{N\log N / T}) + \epsilon_{\mathrm{oracle}}
\end{equation*}
where the expectation is taken w.r.t.~the randomization in the algorithm, both for the sampling steps and the final randomization in choosing $\bar{\theta}$.
\end{cor_excess-cvar_repeat}
\begin{proof}
We use the proof techinque from \citet{agarwal2018learning}[Theorem 3]. 
The CVaR of $\bar{\theta}$ is $\hat{\CVar}^\alpha[L(\bar{\theta})] = \max_{q \in \mathcal{Q^\alpha}} J(\bar{\theta}, q)$. Let $\bar{q}^\star \coloneqq \arg \max_{q \in \mathcal{Q^\alpha}} J(\bar{\theta}, q)$. For any $q \in \mathcal{Q^\alpha}$, we can use the sampler regret to bound:
\begin{align*}
    \E\left[J(\bar{\theta}, q)\right] &= \E \left[\frac{1}{T} \sum_{t=1}^T J(\theta_t, q) \right] \\
    &\leq \E \left[\frac{1}{T} \sum_{t=1}^T J(\theta_t, q_t) \right] + \frac{1}{T}\operatorname{SR}_{T}
\end{align*}
Likewise, for any $\theta \in \Theta$,
\begin{align*}
    \E\left[J(\theta, \bar{q})\right] &= \E \left[\frac{1}{T} \sum_{t=1}^T J(\theta, q_t) \right] \\
    &\geq \E \left[\frac{1}{T} \sum_{t=1}^T J(\theta_t, q_t) \right] - \frac{1}{T}\operatorname{LR}_{T}
\end{align*}
Using the minimax inequality, we bound the excess risk for any $q \in \mathcal{Q}$ and $\theta \in \Theta$ by the average game regret:
\begin{align*}
    \E\left[J(\bar{\theta}, q)\right] & \leq  J(\theta^\star, q^\star) +  \E\left[J(\bar{\theta}, q) - J(\theta, \bar{q}) \right] \\ 
    &\leq J(\theta^\star, q^\star) + \underbrace{\frac{1}{T} \left(\operatorname{LR}_{T} + \operatorname{SR}_{T} \right)}_{\epsilon}
\end{align*}
Noting that, $\operatorname{GameRegret}_T = \operatorname{LR}_{T} + \operatorname{SR}_{T}$ and that the latter results holds in particular for $\bar{q}^\star$. Furthermore, the pair $(\bar{\theta}, \bar{q})$ is an $\epsilon$-equilibrium point of the game in the sense:
$J(\theta^\star, q^\star) - \epsilon \leq \E\left[J(\bar{\theta}, \bar{q})\right] \leq J(\theta^\star, q^\star) + \epsilon$.
\end{proof}

\subsection{Proof of Corollary \ref{cor:excess-population-cvar}} \label{proof:excess-population-cvar}
\newtheorem*{cor_excess-pop-cvar_repeat}{Corollary \ref{cor:excess-population-cvar}}
\begin{cor_excess-pop-cvar_repeat}[Online to Batch Conversion] \label{cor:excess-population-cvar-rep}
% \begin{corollary}[Population Guarantee] \label{cor:excess-population-cvar}
Let $L(\cdot): \Theta \to [0, 1]$ be a Random Variable induced by the data distribution $\mathcal{D}$.
Let $\theta^\star$ be the minimizer of the CVaR at level $\alpha$ of such Random Variable.
Let $\bar{\theta}$ be the output of \adacvar, selected uniformly at random from the sequence $\left\{\theta_t\right\}_{t=1}^T$.
Then, with probability at least $\delta$ the expected excess CVaR of $\bar{\theta}$ is bounded as:
\begin{equation*}
    \mathbb{E} \CVar^\alpha[L(\bar{\theta})] \leq  \CVar^\alpha[L(\theta^\star)] + O(\sqrt{N\log N / T}) + \epsilon_{\mathrm{oracle}} + \epsilon_{\mathrm{stat}}
\end{equation*}
where $\epsilon_{\mathrm{stat}} = \tilde{O}(\frac{1}{\alpha}\sqrt{\frac{1}{N}})$ comes from the statistical error and the expectation is taken w.r.t.~the randomization in the algorithm, both for the sampling steps and the randomization in choosing $\bar{\theta}$.
\end{cor_excess-pop-cvar_repeat}
\begin{proof}
\begin{align*}
    \mathbb{E} \CVar^\alpha[L(\bar{\theta})] &\leq \mathbb{E} \hat{\CVar}^\alpha[L(\bar{\theta})] + \epsilon_{\mathrm{stat}} \\
    & \leq \mathbb{E} \hat{\CVar}^\alpha[L(\theta^\star)] + O(\sqrt{N\log N / T}) + \epsilon_{\mathrm{oracle}} + \epsilon_{\mathrm{stat}}  \\
    & \leq \mathbb{E} \CVar^\alpha[L(\theta^\star)] + O(\sqrt{N\log N / T}) + \epsilon_{\mathrm{oracle}}  + 2\epsilon_{\mathrm{stat}},
\end{align*}
Where the first and last inequality hold by the uniform convergence results from \Cref{prop:uniform-convergence} and the second inequality by \Cref{cor:excess-cvar}.
\end{proof}

\section{Learner Player Algorithm for Convex Losses}  \label{app:ConvexLearner}

In the convex setting, there are online learning algorithms that have no-regret guarantees and there is no need to play the BTL algorithm \eqref{eq:BTL} exactly. Instead, stochastic gradient descent (SGD) \citet{zinkevich2003online} or online mirror descent (OMD) \citep{beck2003mirror} both have no-regret guarantees. 
We focus now on SGD but, for certain geometries of $\Theta$ and appropriate mirror maps, OMD has exponentially better regret guarantees (in terms of the dimension of the problem).

\begin{algorithm}[ht]
 \caption{Ada-CVaR-CVX} \label{alg:CVXGame}
\begin{algorithmic}[1]
    \INPUT Learning rates $\eta_s$, $\eta_l$. 
    \STATE \textbf{Sampler:} Initialize k-DPP $w_1 = \mathbf{1}_N$.
    \STATE \textbf{Learner:} Initialize parameters $\theta_1 \in \Theta$.
    \FOR{$t=1,\ldots, T$}
        \STATE \textbf{Sampler:} Sample element $i_t \sim q_{t} = \frac{1}{k}\P_{w_t}(i)$.
        \STATE \textbf{Sampler}: Build estimate $\hat{L}_{t} = \frac{L_{i_t}(\theta_{t})}{q_{t, i_t}} [[i == i_t]]$.
        \STATE \textbf{Sampler}: Update k-DPP $w_{t+1, i} = w_{t, i} e^{\eta_s \hat{L}_{t, i}}$.
        \STATE \textbf{Learner:} $\theta_{t+1} = \theta_{t} - \eta_l \nabla L_{i_t}(\theta_{t}) $.
    \ENDFOR
    \OUTPUT $\bar{\theta} = \frac{1}{T} \sum_{t=1}^T \theta_t$, $\bar{q} = \frac{1}{T} \sum_{t=1}^T q_t$
\end{algorithmic}
\end{algorithm}

\begin{lemma}
Let assume that $L_i(\cdot): \Theta \to [0, 1]$ be any sequence of convex losses, with $\| \nabla L_i \|_2 \leq G$ and $\|\Theta\|_2 \leq D$, then a learner player that plays SGD algorithm suffers at most regret $O(GD\sqrt{T})$.
\end{lemma}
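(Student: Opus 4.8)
The plan is to invoke the classical analysis of online projected gradient descent (OGD) due to \citet{zinkevich2003online}, combined with convexity and a short martingale argument to pass from the single sampled loss $L_{i_t}(\cdot)$ to the mixture loss $q_t^\top L(\cdot)$ that defines the learner regret $\operatorname{LR}_T$ in \eqref{eq:LearnerRegret}.

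First I would recall the one-step inequality of OGD. Writing $g_t = \nabla L_{i_t}(\theta_t)$ and $\theta_{t+1} = \Pi_\Theta(\theta_t - \eta_l g_t)$, non-expansiveness of the Euclidean projection onto the convex set $\Theta$ gives, for every $u \in \Theta$, $\|\theta_{t+1}-u\|_2^2 \le \|\theta_t - u\|_2^2 - 2\eta_l\langle g_t, \theta_t - u\rangle + \eta_l^2\|g_t\|_2^2$. Summing over $t=1,\dots,T$, telescoping, and using $\|\theta_1 - u\|_2 \le 2D$ together with $\|g_t\|_2 \le G$ yields, simultaneously for every $u\in\Theta$ and pathwise (for any realization of the sampled indices), $\sum_{t=1}^T \langle g_t, \theta_t - u\rangle \le \tfrac{2D^2}{\eta_l} + \tfrac{\eta_l G^2 T}{2}$. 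Convexity of $L_{i_t}$ gives $L_{i_t}(\theta_t) - L_{i_t}(u) \le \langle g_t, \theta_t - u\rangle$, hence $\sum_t L_{i_t}(\theta_t) - \sum_t L_{i_t}(u) \le \tfrac{2D^2}{\eta_l} + \tfrac{\eta_l G^2 T}{2}$ for every $u$. Choosing $\eta_l$ of order $D/(G\sqrt{T})$ balances the two terms and makes the right-hand side $O(GD\sqrt{T})$.

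Next I would take expectations to convert this into a bound on $\operatorname{LR}_T$. Since $\theta_t$ is measurable with respect to $\mathcal{F}_{t-1} = \sigma(i_1,\dots,i_{t-1})$ and $i_t \sim q_t$, we have $\mathbb{E}[L_{i_t}(\theta_t)\mid\mathcal{F}_{t-1}] = q_t^\top L(\theta_t)$, so $\mathbb{E}\big[\sum_t L_{i_t}(\theta_t)\big] = \mathbb{E}\big[\sum_t q_t^\top L(\theta_t)\big]$; and for the comparator $\theta^\star$, which does not depend on the sampling randomness, $\mathbb{E}[L_{i_t}(\theta^\star)\mid\mathcal{F}_{t-1}] = q_t^\top L(\theta^\star)$. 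Setting $u = \theta^\star$ in the pathwise bound and taking expectations gives $\mathbb{E}\big[\sum_t q_t^\top L(\theta_t) - \sum_t q_t^\top L(\theta^\star)\big] \le O(GD\sqrt{T})$, which is exactly the quantity used in place of \Cref{lemma:Learner} in the game-regret decomposition of \Cref{thm:no-regret} and in the online-to-batch step of \Cref{cor:excess-cvar}, and it bounds $\mathbb{E}[\operatorname{LR}_T]$ as claimed.

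The main obstacle is the comparator: $\operatorname{LR}_T$ is defined against the \emph{in-hindsight} minimizer $\arg\min_{\theta}\sum_t q_t^\top L(\theta)$, which depends on all of $i_1,\dots,i_T$, so the clean identity $\mathbb{E}[L_{i_t}(u)\mid\mathcal{F}_{t-1}] = q_t^\top L(u)$ no longer applies to that $u$. I would address this in one of two ways: (i) observe that wherever \Cref{lemma:Learner} is invoked downstream the relevant comparator is the fixed equilibrium $\theta^\star$, for which the argument above is exact; or (ii) if one insists on the hindsight minimizer, control the martingale deviation $\sum_t\big(q_t^\top L(u) - L_{i_t}(u)\big)$ uniformly over an $\epsilon$-cover of $\Theta$ via Azuma–Hoeffding (the losses are $[0,1]$-valued and $G$-Lipschitz), which only adds lower-order $\sqrt{d\log(GDT/\epsilon)}$ terms. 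The remaining computations — telescoping the OGD recursion and tuning $\eta_l$ — are entirely routine.
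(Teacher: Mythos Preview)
Your proposal is correct and follows the same route as the paper, whose proof consists of a single citation to \citet[Chapter~3]{hazan2016introduction} for the standard online gradient descent analysis. Your write-up is in fact more careful than the paper's: you spell out the passage from the sampled loss $L_{i_t}$ to the mixture $q_t^\top L$ in expectation and correctly flag the random-comparator subtlety, resolving it via option~(i) exactly as is needed for the downstream game-regret decomposition in \Cref{thm:no-regret} and \Cref{cor:excess-cvar}.
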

\begin{proof}
    \citet[Chapter 3]{hazan2016introduction}. 
\end{proof}

Note that even if there are algorithms for the strongly convex case or exp-concave case that have $\log(T)$ regret, it does not bring any advantage in our case as the $\sqrt{T}$ term in the sampler regret dominates and is unavoidable \citep{audibert2013regret}. 

\begin{corollary} Let $L_i(\cdot): \Theta \to [0, 1]$ be any sequence of convex losses. Let the learner and sampler player play \Cref{alg:CVXGame}, then the game has regret $O(\sqrt{TN\log N} + \beta \sqrt{T})$, where $\beta$ is a problem-dependent constant.
\end{corollary}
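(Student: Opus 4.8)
The plan is to mirror the proof of \Cref{thm:no-regret}, but to replace the oracle-BTL learner bound of \Cref{lemma:Learner} with the convex online-gradient-descent bound of \Cref{app:ConvexLearner}. The first step is the purely algebraic decomposition of the game regret into sampler regret plus learner regret. Writing $J(\theta,q)=q^\top L(\theta)$ and inserting $\pm\sum_t J(\theta_t,q_t)$,
\begin{align*}
\operatorname{GameRegret}_T
&= \sum\nolimits_{t=1}^T J(\theta_t, q^\star) - J(\theta^\star, q_t) \\
&= \sum\nolimits_{t=1}^T \left( J(\theta_t, q^\star) - J(\theta_t, q_t) \right) + \sum\nolimits_{t=1}^T \left( J(\theta_t, q_t) - J(\theta^\star, q_t) \right) \\
&\leq \left( \max_{q\in\mathcal{Q}^\alpha}\sum\nolimits_{t=1}^T q^\top L_t - \sum\nolimits_{t=1}^T q_t^\top L_t \right) + \left( \sum\nolimits_{t=1}^T q_t^\top L(\theta_t) - \min_{\theta\in\Theta}\sum\nolimits_{t=1}^T q_t^\top L(\theta) \right) \\
&= \operatorname{SR}_T + \operatorname{LR}_T ,
\end{align*}
using $J(\theta_t,q)=q^\top L_t$ together with the definitions \eqref{eq:SamplerRegret} and \eqref{eq:LearnerRegret}. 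This inequality holds regardless of the order in which the two players move, so it applies verbatim to \Cref{alg:CVXGame}.

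Next I would bound the two terms separately. The sampler in \Cref{alg:CVXGame} is exactly the $k$-DPP multiplicative-weights sampler of \adacvar (only the learner's update rule and its placement inside the loop have changed), and \Cref{lemma:Sampler} bounds its regret against \emph{any} sequence of loss vectors $L_t = L(\theta_t)\in[0,1]^N$ by $O(\sqrt{TN\log N})$; since that lemma is agnostic to how the $L_t$ are generated, $\operatorname{SR}_T = O(\sqrt{TN\log N})$ with the stated $\eta_s$. For the learner, each per-round objective $f_t(\theta)\coloneqq q_t^\top L(\theta) = \sum_i q_{t,i} L_i(\theta)$ is convex (a nonnegative combination of the convex $L_i$), and the learner observes $\nabla L_{i_t}(\theta_t)$ with $i_t\sim q_t$, an unbiased estimate of $\nabla f_t(\theta_t)$ of Euclidean norm at most $G$. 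The standard online (stochastic) gradient-descent analysis (Hazan, Chapter 3; Zinkevich) then gives, in expectation over the sampling randomness, $\E[\operatorname{LR}_T] \le \frac{D^2}{2\eta_l} + \frac{\eta_l G^2 T}{2}$, which for $\eta_l = D/(G\sqrt{T})$ equals $GD\sqrt{T}$; this is precisely the content of the convex-learner lemma in \Cref{app:ConvexLearner}. Setting $\beta \coloneqq GD$ — the product of a gradient bound and the diameter of $\Theta$, i.e.\ a problem-dependent constant — gives $\operatorname{LR}_T = O(\beta\sqrt{T})$, and combining the two bounds with the decomposition yields $\operatorname{GameRegret}_T = O(\sqrt{TN\log N} + \beta\sqrt{T})$.

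I do not expect a genuine obstacle: the work is essentially bookkeeping over an existing template. The two points that need care are (i) checking that \Cref{lemma:Sampler} is stated for an arbitrary loss sequence, so swapping the learner from BTL to SGD does not touch the sampler's guarantee; and (ii) remembering that the convex-learner bound — unlike the deterministic BTL bound of \Cref{lemma:Learner} — holds only in expectation over the sampling, so the resulting game-regret statement, and consequently the excess-CVaR conclusions one would draw via the argument of \Cref{cor:excess-cvar,cor:excess-population-cvar}, should be read in expectation. If a high-probability version were desired instead, one would additionally apply Azuma–Hoeffding to the martingale differences $\langle g_t - \nabla f_t(\theta_t),\, \theta_t - \theta \rangle$, contributing an extra $O(\beta\sqrt{T\log(1/\delta)})$ term that is subsumed into the same rate.
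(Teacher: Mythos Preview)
Your proposal is correct and follows exactly the template the paper uses: the corollary in \Cref{app:ConvexLearner} is stated without a separate proof, but it is meant to be read as the obvious variant of the proof of \Cref{thm:no-regret} with \Cref{lemma:Learner} swapped for the convex SGD lemma, which is precisely what you do. Your additional remarks about the sampler bound being agnostic to the loss sequence and about the learner bound holding in expectation are accurate and, if anything, more careful than the paper itself.
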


This result changes the $\epsilon_{\mathrm{oracle}}$ term in \Cref{cor:excess-cvar,cor:excess-population-cvar} by a $O(1/\sqrt{T})$ term. 

\section{Experimental Setup} \label{app:ExperimentalSetup}

\paragraph{Implementation and Resources:}
We implemented all our experiments using PyTorch \citep{paszke2017automatic}. 
We ran our experiments convex experiments on an Intel(R) Xeon(R) CPU E5-2697 v4 \@ 2.30GHz machine. Our deep lerning experiments ran on an NVIDIA GeForce GTX 1080 Ti GPU. 

\paragraph{Datasets:} 
For classification we use the Adult, Australian Credit Approval, German Credit Data, Monks-Problems-1, Spambase, and Splice-junction Gene Sequences datasets from the UCI repository \citep{Dua:2019} and the Titanic Disaster dataset from \citep{eaton1995titanic}.
For regression we use the Boston Housing, and Abalone, and CPU small from the UCI repository, the sinc dataset is synthetic recreated from \citep{fan2017learning}, and normal and pareto datasets are synthetic datasets recreated from \citep{brownlees2015empirical} with Gaussian and Pareto noise, respectively.
For vision datasets, we use MNIST \citep{lecun1998gradient}, Fashion-MNIST \citep{xiao2017fashion}, and CIFAR-10 \citep{krizhevsky2014cifar}.

\paragraph{Models:} 
For convex regression and classification models we use linear models. 
For MNIST we use LeNet-5 neural network \citep{lecun1995convolutional}. 
For Fashion MNIST we use LeNet-5 using dropout \citep{hinton2012improving}. 
For CIFAR-10 we use ResNet18 \citep{he2016deep} neural network and VGG-16 \citep{simonyan2014very} with batch normalization \citep{ioffe2015batch}. 

\paragraph{Dataset Preparation:} 
We split UCI datasets into train, validation, and test set using a 50/30/20 split.
For vision tasks we use as validation set the same images in the train set, without applying data-augmentations.
For discrete categorical data, we use a one-hot-encoding.
We standarize continuous data.

\paragraph{Hyper-Parameter Search:} 
We ran a grid search over the hyperparameters for all the algorithms with a five different random seeds. In regression tasks, we selected the set of hyper-parameters with the lowest CVaR in the validation set. 
In classification tasks, we selected the set of hyper-parameters with the highest accuracy in the validation set. 
The hyperparameters are:
\begin{enumerate}
    \item Optimizer SGD with momentum or ADAM \citep{kingma2014adam}.
    \item Initial learning rates: \\ $\{0.05, 0.01, 0.005, 0.001, 0.0005, 0.0001, 0.00001 \}$,
    \item Momentum:  0.9
    \item Learning rate decay: 0.1 at epochs 20 and 40. 
    \item Batch Size $\{64, 128\}$,
    \item Adaptive algorithm learning rate: $\{1.0, 0.5, 0.1, *\}$, where (*) is the optimal learning rate,
    \item Mixing with uniform distribution: $\{0, 0.01, 0.1 \}$,
    \item Adaptive algorithm learning rate decay scheduling: $\{ \text{constant}, O(1/\sqrt{t}), \text{Adagrad} \} $,
    \item \softcvar algorithm temperature:  $\{0.1, 1, 10\}$.
    \item Random seeds:  $\{0, 1, 2, 3, 4\}$.
    \item Early stopping: $\{\text{True}, \text{False}\}$.
\end{enumerate}

\paragraph{Experimental Significance:} 
In UCI and synthetic datasets, the test results of each algorithm is paired because the same data split is used across the algorithms. Likewise, for vision datasets, the neural network initialization is paired across experiments. Therefore, we use a paired t-test to determine statistical significance for $p \leq 0.05$ \citep{zimmerman1997teacher}. 
In \trunkcvar and \softcvar experiments, we usually encountered numerical overflows, we discarded such experiments to determine significance. 

\paragraph{Evaluation Metric Normalization} \looseness -1
We evaluate the CVaR and the Average Loss for regression and we add Accuracy, and the CVaR and the Average Loss of the surrogate loss for classification. 
We normalize the mean score $s$ of algorithm $a$ on a given task by $(s_a - \min_a s_a) / (\max_a s_a - \min_a s_a)$ and we divide the standard deviation by $\max_a s_a$. 

\section{Extended Experimental Results} \label{app:DetailedExperiments}
In \Cref{tab:ConvexCvar} we show the results of \Cref{exp:convex}, in \Cref{tab:ShiftTest} the results of \Cref{exp:convex-dist}, and in \Cref{tab:VisionTest} the results of \Cref{exp:deep-cvar}.
\begin{table*}[htpb]
\scriptsize 
\centering
\setlength{\tabcolsep}{4.25pt}
\renewcommand{\arraystretch}{1.25}
\caption{
\looseness -1 Test mean $\pm$ s.d. over five independent data splits. In shaded bold we indicate the best algorithms.
In regression, we show the CVaR/(mean) loss. \adacvar is competitive to benchmarks optimizing the CVaR.
In classification, we show the accuracy / precision in ``Accuracy'' rows and CVaR/loss in ``Surrogate'' rows. 
\adacvar has similar accuracy to \mean/\softcvar, but with a lower CVaR. 
} \label{tab:ConvexCvar}
\begin{tabular}{cc|cc|cc|cc|cc}
    \toprule
    \multicolumn{2}{c|}{Data Set} &  \multicolumn{2}{c|}{\adacvar}  & \multicolumn{2}{c|}{\trunkcvar}  & \multicolumn{2}{c|}{\softcvar} & \multicolumn{2}{c}{\mean}  \\ \midrule
    \multirow{6}{*}{\STAB{\rotatebox[origin=c]{90}{Regression Loss}}\hspace{-3mm}} 
& Abalone & \winner{8.92 $\pm$ 3.3} & 0.61 $\pm$ 0.1 &  \winner{7.94 $\pm$ 2.7} & 0.79 $\pm$ 0.1 &  14.25 $\pm$ 0.3 & 0.63 $\pm$ 0.0 &  11.07 $\pm$ 3.7 & \winner{0.51 $\pm$ 0.1} \\
& Boston & \winner{3.09 $\pm$ 0.8} & 0.28 $\pm$ 0.1  &  \winner{3.02 $\pm$ 1.0} & 0.36 $\pm$ 0.0 &  3.28 $\pm$ 1.4 & \winner{0.24 $\pm$ 0.1} &  4.51 $\pm$ 1.9 & \winner{0.27 $\pm$ 0.1}\\
& Cpu & \winner{2.32 $\pm$ 0.2} & 0.58 $\pm$ 0.0  &  \winner{2.12 $\pm$ 0.2} & 0.57 $\pm$ 0.1  & 2.85 $\pm$ 0.0 & 0.40 $\pm$ 0.0 &  9.95 $\pm$ 1.4 & \winner{0.30 $\pm$ 0.0}\\
& Normal & \winner{0.22 $\pm$ 0.1} & 0.03 $\pm$ 0.0  &  0.42 $\pm$ 0.3 & 0.06 $\pm$ 0.0 &  \winner{0.18 $\pm$ 0.0} & \winner{0.01 $\pm$ 0.0} &  0.55 $\pm$ 0.2 & 0.07 $\pm$ 0.0\\
& Pareto & \winner{0.39 $\pm$ 0.3} & 0.02 $\pm$ 0.0  &  0.43 $\pm$ 0.1 & 0.05 $\pm$ 0.0 &  \winner{0.30 $\pm$ 0.3} & \winner{0.01 $\pm$ 0.0} &  0.69 $\pm$ 0.1 & 0.06 $\pm$ 0.0\\
& Sinc & \winner{7.70 $\pm$ 3.1} & \winner{0.80 $\pm$ 0.2}  &  \winner{7.82 $\pm$ 3.5} & \winner{0.74 $\pm$ 0.2} & \winner{7.82 $\pm$ 3.5} & \winner{0.72 $\pm$ 0.2} &  8.40 $\pm$ 3.9 & \winner{0.71 $\pm$ 0.2} \\
\midrule
\multirow{8}{*}{\STAB{\rotatebox[origin=c]{90}{{Classification Accuracy}}}\hspace{-3mm}} 
& Adult & \winner{0.85 $\pm$ 0.0} & 0.72 $\pm$ 0.0  &  0.71 $\pm$ 0.1 & 0.44 $\pm$ 0.2 &  \winner{0.85 $\pm$ 0.0} & \winner{0.74 $\pm$ 0.0}&  \winner{0.85 $\pm$ 0.0} & \winner{0.74 $\pm$ 0.0} \\
& Australian & \winner{0.82 $\pm$ 0.0} & \winner{0.79 $\pm$ 0.0}  &  0.66 $\pm$ 0.1 & 0.62 $\pm$ 0.1 &  \winner{0.82 $\pm$ 0.0} & \winner{0.81 $\pm$ 0.0} &  0.81 $\pm$ 0.0 & 0.78 $\pm$ 0.0 \\
& German & 0.74 $\pm$ 0.0 & 0.65 $\pm$ 0.0  &  0.64 $\pm$ 0.0 & 0.43 $\pm$ 0.0 &  \winner{0.78 $\pm$ 0.0} & \winner{0.71 $\pm$ 0.0} &  \winner{0.77 $\pm$ 0.0} & \winner{0.67 $\pm$ 0.1}\\
& Monks & \winner{0.62 $\pm$ 0.1} & \winner{0.59 $\pm$ 0.1}  &  0.53 $\pm$ 0.1 & 0.50 $\pm$ 0.1 &  \winner{0.63 $\pm$ 0.0} & 0.68 $\pm$ 0.0 &  0.61 $\pm$ 0.1 & 0.66 $\pm$ 0.1\\
& Phoneme & \winner{0.75 $\pm$ 0.0} & \winner{0.59 $\pm$ 0.0}  &  0.47 $\pm$ 0.1 & 0.26 $\pm$ 0.0 &  \winner{0.75 $\pm$ 0.0} & \winner{0.60 $\pm$ 0.0} &  \winner{0.76 $\pm$ 0.0} & \winner{0.60 $\pm$ 0.0} \\
& Spambase & 0.90 $\pm$ 0.0 & 0.88 $\pm$ 0.0  &  0.81 $\pm$ 0.0 & 0.71 $\pm$ 0.0 &  \winner{0.93 $\pm$ 0.0} & \winner{0.92 $\pm$ 0.0} &  \winner{0.92 $\pm$ 0.0} & 0.91 $\pm$ 0.0\\
& Splice & \winner{0.94 $\pm$ 0.0} & \winner{0.93 $\pm$ 0.0}  &  0.90 $\pm$ 0.0 & 0.89 $\pm$ 0.0 &  0.92 $\pm$ 0.0 & 0.91 $\pm$ 0.0 & 0.93 $\pm$ 0.0 & 0.92 $\pm$ 0.0\\
& Titanic & \winner{0.78 $\pm$ 0.0} & \winner{0.74 $\pm$ 0.0}  &  0.55 $\pm$ 0.2 & 0.40 $\pm$ 0.3 &  \winner{0.78 $\pm$ 0.0} & \winner{0.75 $\pm$ 0.0} &  \winner{0.78 $\pm$ 0.0} & \winner{0.75 $\pm$ 0.0}\\
\midrule
\multirow{8}{*}{\STAB{\rotatebox[origin=c]{90}{Classification Surrogate}} \hspace{-3mm}} 
& Adult & 1.95 $\pm$ 0.1 & 0.37 $\pm$ 0.0  &  \winner{0.70 $\pm$ 0.0} & 0.69 $\pm$ 0.0 &  3.08 $\pm$ 0.1 & \winner{0.32 $\pm$ 0.0} &  3.13 $\pm$ 0.1 & 0.32 $\pm$ 0.0\\
& Australian & 1.33 $\pm$ 0.1 & 0.49 $\pm$ 0.0  &  \winner{0.83 $\pm$ 0.0} & 0.66 $\pm$ 0.0 &  1.78 $\pm$ 0.1 & 0.47 $\pm$ 0.0 &  1.93 $\pm$ 0.1 & \winner{0.45 $\pm$ 0.0}\\
& German & 1.41 $\pm$ 0.2 & 0.56 $\pm$ 0.0  &  \winner{0.86 $\pm$ 0.0} & 0.67 $\pm$ 0.0 &  2.04 $\pm$ 0.2 & \winner{0.50 $\pm$ 0.0} &  1.90 $\pm$ 0.2 & \winner{0.51 $\pm$ 0.0}\\
& Monks & \winner{0.89 $\pm$ 0.0} & \winner{0.66 $\pm$ 0.0}  &  \winner{0.88 $\pm$ 0.0} & 0.68 $\pm$ 0.0 &  1.36 $\pm$ 0.1 & 0.69 $\pm$ 0.0 &  1.07 $\pm$ 0.0 & \winner{0.66 $\pm$ 0.0}\\
& Phoneme & 0.83 $\pm$ 0.0 & 0.64 $\pm$ 0.0  &  \winner{0.69 $\pm$ 0.0} & 0.69 $\pm$ 0.0 &  2.56 $\pm$ 0.0 & \winner{0.47 $\pm$ 0.0} &  2.63 $\pm$ 0.0 & \winner{0.47 $\pm$ 0.0} \\
& Spambase & \winner{1.04 $\pm$ 0.1} & 0.49 $\pm$ 0.0  &  1.15 $\pm$ 0.3 & 0.49 $\pm$ 0.0 &  6.30 $\pm$ 1.7 & \winner{0.23 $\pm$ 0.0} &  5.23 $\pm$ 1.4 & \winner{0.23 $\pm$ 0.0}\\
& Splice & 1.57 $\pm$ 0.2 & 0.27 $\pm$ 0.0  &  \winner{1.06 $\pm$ 0.1} & 0.49 $\pm$ 0.0 &  5.45 $\pm$ 0.9 & \winner{0.20 $\pm$ 0.0} &  1.75 $\pm$ 0.2 & \winner{0.22 $\pm$ 0.0}\\
& Titanic & 0.78 $\pm$ 0.0 & 0.66 $\pm$ 0.0 &  \winner{0.71 $\pm$ 0.0} & 0.70 $\pm$ 0.0 &  1.70 $\pm$ 0.0 & \winner{0.52 $\pm$ 0.0} &  1.68 $\pm$ 0.0 & \winner{0.52 $\pm$ 0.0} \\
\bottomrule
\end{tabular}
\end{table*}

\begin{table*}[htpb]
\scriptsize
\centering
\setlength{\tabcolsep}{5pt}
\renewcommand{\arraystretch}{1.25}
\caption{
Test accuracy/loss (mean $\pm$ s.d.) over five independent data splits with \emph{train/test distribution shift}. In shaded bold we indicate the best algorithms.
\adacvar has superior test accuracy than benchmarks. It has comparable test loss to \trunkcvar. 
} \label{tab:ShiftTest}
\begin{tabular}{cc|cc|cc|cc|cc}
    \toprule
    \multicolumn{2}{c|}{Data Set} &  \multicolumn{2}{c|}{\adacvar}  & \multicolumn{2}{c|}{\trunkcvar}  & \multicolumn{2}{c|}{\softcvar} & \multicolumn{2}{c}{\mean}  \\ \midrule
    \multirow{8}{*}{\STAB{\rotatebox[origin=c]{90}{Distribution Shift 10\%}}\hspace{-5mm}} 
& Adult      &  \winner{0.65 $\pm$ 0.0} & \winner{0.63 $\pm$ 0.0} &  0.61 $\pm$ 0.1 & 0.69 $\pm$ 0.0 &  0.63 $\pm$ 0.1 & 0.84 $\pm$ 0.3 &  0.60 $\pm$ 0.1 & 0.90 $\pm$ 0.4 \\
& Australian &  \winner{0.68 $\pm$ 0.0} & \winner{0.58 $\pm$ 0.0} &  0.46 $\pm$ 0.1 & 0.70 $\pm$ 0.0 &  0.48 $\pm$ 0.3 & 0.91 $\pm$ 0.4 &  0.46 $\pm$ 0.3 & 0.87 $\pm$ 0.3 \\
& German     &  \winner{0.48 $\pm$ 0.1} & \winner{0.73 $\pm$ 0.0} &  0.44 $\pm$ 0.1 & 0.75 $\pm$ 0.0 &  0.45 $\pm$ 0.1 & 1.00 $\pm$ 0.2 &  \winner{0.48 $\pm$ 0.1} & 0.80 $\pm$ 0.1 \\
& Monks      &  \winner{0.49 $\pm$ 0.0} & 0.72 $\pm$ 0.0 &  0.39 $\pm$ 0.1 & 0.73 $\pm$ 0.0 &  0.43 $\pm$ 0.3 & 1.27 $\pm$ 0.6 &  0.40 $\pm$ 0.2 & 0.91 $\pm$ 0.3 \\
& Phoneme    &  \winner{0.58 $\pm$ 0.1} & \winner{0.68 $\pm$ 0.0} &  0.52 $\pm$ 0.2 & 0.69 $\pm$ 0.0 &  0.51 $\pm$ 0.2 & 1.08 $\pm$ 0.4 &  0.55 $\pm$ 0.2 & 0.92 $\pm$ 0.3 \\
& Spambase   &  \winner{0.77 $\pm$ 0.0} & \winner{0.46 $\pm$ 0.0} &  0.74 $\pm$ 0.1 & 0.60 $\pm$ 0.0 &  0.71 $\pm$ 0.2 & 0.61 $\pm$ 0.2 &  0.69 $\pm$ 0.2 & 0.58 $\pm$ 0.2 \\
& Splice     &  \winner{0.84 $\pm$ 0.0} & \winner{0.40 $\pm$ 0.0} &  0.77 $\pm$ 0.1 & 0.57 $\pm$ 0.0 &  0.73 $\pm$ 0.1 & 0.52 $\pm$ 0.2 &  0.50 $\pm$ 0.3 & 0.81 $\pm$ 0.4 \\
& Titanic    &  0.46 $\pm$ 0.2 & \winner{0.70 $\pm$ 0.0} &  \winner{0.50 $\pm$ 0.3} & \winner{0.69 $\pm$ 0.0} &  0.43 $\pm$ 0.3 & 1.17 $\pm$ 0.5 &  0.41 $\pm$ 0.3 & 0.87 $\pm$ 0.3 \\
\bottomrule
\end{tabular}
\end{table*}

\begin{table*}[htpb]
\scriptsize 
\centering
\setlength{\tabcolsep}{4.25pt}
\renewcommand{\arraystretch}{1.25}
\caption{
\looseness -1 ``Accuracy'' rows: Avg.~test accuracy/(min class) precision.
``Surrogate'' rows: Avg.~test CVaR/loss.
Average over 5 different random seeds. 
\adacvar matches the accuracy and average loss performance of \mean, but has a lower CVaR. 
\trunkcvar and \softcvar struggle to learn in non-convex tasks.
} \label{tab:VisionTest}
\begin{tabular}{cc|cc|cc|cc|cc}
    \toprule
    \multicolumn{2}{c|}{Data Set} &  \multicolumn{2}{c|}{\adacvar}  & \multicolumn{2}{c|}{\trunkcvar}  & \multicolumn{2}{c|}{\softcvar} & \multicolumn{2}{c}{\mean}  \\ \midrule
\multirow{3}{*}{\STAB{\rotatebox[origin=c]{90}{{Accuracy}}}\hspace{-3mm}} 
& MNIST & \winner{0.99 $\pm$ 0.0} & \winner{0.98 $\pm$ 0.0} &  \winner{0.99 $\pm$ 0.0} & \winner{0.98 $\pm$ 0.0} &  \winner{0.99 $\pm$ 0.0} & \winner{0.98 $\pm$ 0.0} &  \winner{0.99 $\pm$ 0.0} & \winner{0.98 $\pm$ 0.0} \\
& Fashion-MNIST & \winner{0.99 $\pm$ 0.0} & \winner{0.99 $\pm$ 0.0} &  \winner{0.99 $\pm$ 0.0} & 0.98 $\pm$ 0.0 &  \winner{0.99 $\pm$ 0.0} & 0.98 $\pm$ 0.0 &  \winner{0.99 $\pm$ 0.0} & 0.97 $\pm$ 0.0 \\
& Cifar-10 & \winner{0.94 $\pm$ 0.0} & \winner{0.87 $\pm$ 0.0} &  0.59 $\pm$ 0.0 & 0.50 $\pm$ 0.0 &  0.86 $\pm$ 0.1 & 0.64 $\pm$ 0.2 &  \winner{0.94 $\pm$ 0.0} & \winner{0.87 $\pm$ 0.0} \\
\midrule
\multirow{3}{*}{\STAB{\rotatebox[origin=c]{90}{Surrogate}} \hspace{-3mm}} 
& MNIST & \winner{0.31 $\pm$ 0.0} & \winner{0.03 $\pm$ 0.0} &  0.35 $\pm$ 0.1 & 0.03 $\pm$ 0.0 &  0.41 $\pm$ 0.1 & 0.04 $\pm$ 0.0 &  0.33 $\pm$ 0.0 & \winner{0.03 $\pm$ 0.0} \\
& Fashion-MNIST & \winner{0.31 $\pm$ 0.0} & \winner{0.03 $\pm$ 0.0} &  0.67 $\pm$ 0.0 & 0.14 $\pm$ 0.0 &  0.35 $\pm$ 0.0 & 0.04 $\pm$ 0.0 &  0.38 $\pm$ 0.0 & 0.04 $\pm$ 0.0 \\
& Cifar-10 & 2.36 $\pm$ 0.1 & \winner{0.25 $\pm$ 0.1} &  \winner{2.02 $\pm$ 0.0} & 1.32 $\pm$ 0.0 &  2.79 $\pm$ 0.1 & 0.32 $\pm$ 0.0 &  2.49 $\pm$ 0.1 & \winner{0.24 $\pm$ 0.0} \\
\bottomrule
\end{tabular}
\end{table*}

\newpage 
\subsection{Comparison to other Techniques that Address Distribution Shift} \label{app:exp:dist-shift}

In \cref{exp:convex-dist}, how \adacvar and the other CVaR optimization algorithms guarded against distribution shift. Now, we compare how do this algorithms compare with algorithms that address distribution shift.
We compare against up-sampling and down-sampling techniques for distribution shift for {\em all} the benchmarks considered in the experiment section. 
Namely, we perform up-sampling coupled with \adacvar, \trunkcvar, \softcvar, and \mean. 
We expect up-sampling techniques to perform better than raw CVaR optimization when the distribution shift is random {\em and} the test distribution is {\em balanced}. 

The CVaR guards against {\em worst-case} distribution shifts for all distributions in the {\em DRO} set. 
Certainly, balanced test-sets are contained in the {\em DRO}, but the {\em worst-case} guarantees that CVaR brings might be too pessimistic for this particular case. 
We expect specialized techniques to perform better. 
Nonetheless, the privileged information that the test-set is balanced might not always be available. 
In such cases, it could be preferable to optimize for the CVaR of the up-sampled data set, possibly for a large value of $\alpha$.

\FloatBarrier
\paragraph{Train-Set Shift}
In this subsection, we repeat the distribution shift experiment. Here, the train set is shifted (1-to-10 ratio) and the test-set is kept constant. In sake of presentation clarity, we show again the results in \cref{table:train:}. 
We repeat the experiments but upsampling the training set to balance the dataset. 
We show the results in \cref{table:train:upsample}. 
When we compare \mean with upsampling (last column in \cref{table:train:upsample})  and \adacvar without upsampling (first column in \cref{table:test:}), we see that both methods are comparable. 
This is because \adacvar addresses this distribution shift algorithmically whereas \mean does so by re-balancing the dataset. 
Nevertheless, \adacvar can also re-sample the data set to balance it. This is \adacvar with upsampling (first column in \cref{table:train:upsample}, that outperforms all other algorithms with (and without) upsampling. 

\begin{table}[htpb]
\scriptsize
\centering
\setlength{\tabcolsep}{5pt}
\renewcommand{\arraystretch}{1.25}
\caption{
Test accuracy/loss (mean $\pm$ s.d. for 5 random seeds) for train-set distribution shift without train set re-balancing.
We highlight the best algorithms.
}
\label{table:train:}
\begin{tabular}{cc|cc|cc|cc|cc}
\toprule
\multicolumn{2}{c|}{Data Set} &  \multicolumn{2}{c|}{\adacvar}  & \multicolumn{2}{c|}{\trunkcvar}  & \multicolumn{2}{c|}{\softcvar} & \multicolumn{2}{c}{\mean}  \\
\midrule \multirow{8}{*}{\STAB{\rotatebox[origin=c]{90}{Train Set Shift}}\hspace{-5mm}}
& Adult      &  \winner{0.65 $\pm$ 0.0} & \winner{0.63 $\pm$ 0.0} &  0.61 $\pm$ 0.1 & 0.69 $\pm$ 0.0 &  0.63 $\pm$ 0.1 & 0.84 $\pm$ 0.3 &  0.60 $\pm$ 0.1 & 0.90 $\pm$ 0.4 \\
& Australian &  \winner{0.68 $\pm$ 0.0} & \winner{0.58 $\pm$ 0.0} &  0.46 $\pm$ 0.1 & 0.70 $\pm$ 0.0 &  0.48 $\pm$ 0.3 & 0.91 $\pm$ 0.4 &  0.46 $\pm$ 0.3 & 0.87 $\pm$ 0.3 \\
& German     &  \winner{0.48 $\pm$ 0.1} & \winner{0.73 $\pm$ 0.0} &  0.44 $\pm$ 0.1 & 0.75 $\pm$ 0.0 &  0.45 $\pm$ 0.1 & 1.00 $\pm$ 0.2 &  \winner{0.48 $\pm$ 0.1} & 0.80 $\pm$ 0.1 \\
& Monks      &  \winner{0.49 $\pm$ 0.0} & \winner{0.72 $\pm$ 0.0} &  0.39 $\pm$ 0.1 & 0.73 $\pm$ 0.0 &  0.43 $\pm$ 0.3 & 1.27 $\pm$ 0.6 &  0.40 $\pm$ 0.2 & 0.91 $\pm$ 0.3 \\
& Phoneme    &  \winner{0.58 $\pm$ 0.1} & \winner{0.68 $\pm$ 0.0} &  0.52 $\pm$ 0.2 & 0.69 $\pm$ 0.0 &  0.51 $\pm$ 0.2 & 1.08 $\pm$ 0.4 &  0.55 $\pm$ 0.2 & 0.92 $\pm$ 0.3 \\
& Spambase   &  \winner{0.77 $\pm$ 0.0} & \winner{0.46 $\pm$ 0.0} &  0.74 $\pm$ 0.1 & 0.60 $\pm$ 0.0 &  0.71 $\pm$ 0.2 & 0.61 $\pm$ 0.2 &  0.69 $\pm$ 0.2 & 0.58 $\pm$ 0.2 \\
& Splice     &  \winner{0.84 $\pm$ 0.0} & \winner{0.40 $\pm$ 0.0} &  0.77 $\pm$ 0.1 & 0.57 $\pm$ 0.0 &  0.73 $\pm$ 0.1 & 0.52 $\pm$ 0.2 &  0.50 $\pm$ 0.3 & 0.81 $\pm$ 0.4 \\
& Titanic    &  0.46 $\pm$ 0.2 & \winner{0.70 $\pm$ 0.0} &  \winner{0.50 $\pm$ 0.3} & \winner{0.69 $\pm$ 0.0} &  0.43 $\pm$ 0.3 & 1.17 $\pm$ 0.5 &  0.41 $\pm$ 0.3 & 0.87 $\pm$ 0.3 \\
\bottomrule
\end{tabular}
\end{table}

\begin{table}[htpb]
\scriptsize
\centering
\setlength{\tabcolsep}{5pt}
\renewcommand{\arraystretch}{1.25}
\caption{
Test accuracy/loss (mean $\pm$ s.d. for 5 random seeds) for train-set distribution shift with train set re-balancing via upsampling
We highlight the best algorithms.
}
\label{table:train:upsample}
\begin{tabular}{cc|cc|cc|cc|cc}
\toprule
\multicolumn{2}{c|}{Data Set} &  \multicolumn{2}{c|}{\adacvar}  & \multicolumn{2}{c|}{\trunkcvar}  & \multicolumn{2}{c|}{\softcvar} & \multicolumn{2}{c}{\mean}  \\
\midrule \multirow{8}{*}{\STAB{\rotatebox[origin=c]{90}{Train Set Shift \textbf{Upsample}}}\hspace{-5mm}}
& Adult      &  0.51 $\pm$ 0.0 & \winner{0.69 $\pm$ 0.0} &  0.62 $\pm$ 0.1 & \winner{0.69 $\pm$ 0.0} &  \winner{0.67 $\pm$ 0.1} & 0.73 $\pm$ 0.3 &  \winner{0.65 $\pm$ 0.2} & 0.78 $\pm$ 0.4 \\
& Australian &  \winner{0.77 $\pm$ 0.0} & \winner{0.56 $\pm$ 0.0} &  0.50 $\pm$ 0.1 & 0.69 $\pm$ 0.0 &  0.56 $\pm$ 0.3 & \winner{0.79 $\pm$ 0.4} &  0.54 $\pm$ 0.3 & \winner{0.77 $\pm$ 0.3} \\
& German     &  \winner{0.58 $\pm$ 0.0} & \winner{0.68 $\pm$ 0.0} &  0.46 $\pm$ 0.1 & 0.74 $\pm$ 0.0 &  0.50 $\pm$ 0.2 & 0.92 $\pm$ 0.2 &  0.51 $\pm$ 0.1 & 0.76 $\pm$ 0.1 \\
& Monks      &  \winner{0.63 $\pm$ 0.0} & \winner{0.66 $\pm$ 0.0} &  0.42 $\pm$ 0.1 & 0.72 $\pm$ 0.0 &  0.49 $\pm$ 0.3 & 1.10 $\pm$ 0.3 &  0.45 $\pm$ 0.2 & \winner{0.85 $\pm$ 0.3} \\
& Phoneme    &  \winner{0.65 $\pm$ 0.1} & \winner{0.68 $\pm$ 0.0} &  0.57 $\pm$ 0.2 & \winner{0.69 $\pm$ 0.0} &  0.57 $\pm$ 0.2 & 0.94 $\pm$ 0.4 &  0.60 $\pm$ 0.2 & \winner{0.82 $\pm$ 0.3} \\
& Spambase   &  \winner{0.88 $\pm$ 0.0} & \winner{0.43 $\pm$ 0.0} &  0.78 $\pm$ 0.1 & 0.60 $\pm$ 0.0 &  0.76 $\pm$ 0.2 & \winner{0.54 $\pm$ 0.2} &  0.74 $\pm$ 0.2 & \winner{0.51 $\pm$ 0.2} \\
& Splice     &  \winner{0.89 $\pm$ 0.0} & \winner{0.30 $\pm$ 0.0} &  0.80 $\pm$ 0.1 & 0.54 $\pm$ 0.1 &  0.77 $\pm$ 0.1 & 0.45 $\pm$ 0.2 &  0.61 $\pm$ 0.3 & 0.67 $\pm$ 0.4 \\
& Titanic    &  \winner{0.51 $\pm$ 0.2} & \winner{0.70 $\pm$ 0.0} &  \winner{0.54 $\pm$ 0.3} & \winner{0.69 $\pm$ 0.0} &  \winner{0.52 $\pm$ 0.3} & 1.02 $\pm$ 0.5 &  \winner{0.49 $\pm$ 0.3} & 0.80 $\pm$ 0.3 \\
\bottomrule
\end{tabular}
\end{table}
\FloatBarrier
\vspace{-1em}
\paragraph{Test Shift}
Next, we consider another distribution shift. Namely, the train set is kept constant and instead the the test-set is shifted (1-to-10 ratio).
In this case, up-sampling should not change the test results. 
We show the results without re-sampling in \cref{table:test:} and with upsampling in \cref{table:test:upsample}. In this setting, upsampling does not help because the training sets are balanced. Comparing each case, we see that \adacvar has superior performance to the benchmarks. 

% \vspace{-1em}
\begin{table}[htpb]
\scriptsize
\centering
\setlength{\tabcolsep}{5pt}
\renewcommand{\arraystretch}{1.25}
\caption{
Test accuracy/loss (mean $\pm$ s.d. for 5 random seeds) for test-set distribution shift without train set re-balancing.
We highlight the best algorithms.
}
\label{table:test:}
\begin{tabular}{cc|cc|cc|cc|cc}
\toprule
\multicolumn{2}{c|}{Data Set} &  \multicolumn{2}{c|}{\adacvar}  & \multicolumn{2}{c|}{\trunkcvar}  & \multicolumn{2}{c|}{\softcvar} & \multicolumn{2}{c}{\mean}  \\
\midrule \multirow{8}{*}{\STAB{\rotatebox[origin=c]{90}{Test Set Shift}}\hspace{-5mm}}
& Adult      &  0.50 $\pm$ 0.0 & \winner{0.69 $\pm$ 0.0} &  0.64 $\pm$ 0.1 & \winner{0.69 $\pm$ 0.0} &  \winner{0.66 $\pm$ 0.1} & \winner{0.72 $\pm$ 0.1} &  0.64 $\pm$ 0.1 & 0.77 $\pm$ 0.1 \\
& Australian &  \winner{0.76 $\pm$ 0.0} & \winner{0.61 $\pm$ 0.0} &  0.53 $\pm$ 0.1 & 0.69 $\pm$ 0.0 &  0.61 $\pm$ 0.3 & 0.71 $\pm$ 0.2 &  0.58 $\pm$ 0.3 & 0.72 $\pm$ 0.1 \\
& German     &  \winner{0.52 $\pm$ 0.1} & \winner{0.69 $\pm$ 0.0} &  0.46 $\pm$ 0.1 & 0.73 $\pm$ 0.0 &  0.51 $\pm$ 0.1 & 0.90 $\pm$ 0.2 & \winner{0.52 $\pm$ 0.1} & 0.75 $\pm$ 0.1 \\
& Monks      &  \winner{0.63 $\pm$ 0.1} & \winner{0.66 $\pm$ 0.0} &  0.44 $\pm$ 0.2 & 0.71 $\pm$ 0.0 &  0.50 $\pm$ 0.2 & 0.99 $\pm$ 0.2 &  0.48 $\pm$ 0.2 & 0.81 $\pm$ 0.2 \\
& Phoneme    &  0.50 $\pm$ 0.2 & \winner{0.69 $\pm$ 0.0} &  0.54 $\pm$ 0.2 & \winner{0.69 $\pm$ 0.0} &  0.56 $\pm$ 0.2 & 0.92 $\pm$ 0.4 &  \winner{0.58 $\pm$ 0.1} & 0.82 $\pm$ 0.3 \\
& Spambase   &  \winner{0.91 $\pm$ 0.0} & \winner{0.52 $\pm$ 0.0} &  0.81 $\pm$ 0.1 & 0.62 $\pm$ 0.0 &  0.79 $\pm$ 0.1 & \winner{0.48 $\pm$ 0.2} &  0.77 $\pm$ 0.2 & \winner{0.46 $\pm$ 0.2} \\
& Splice     &  \winner{0.91 $\pm$ 0.0} & \winner{0.29 $\pm$ 0.0} &  0.82 $\pm$ 0.1 & 0.54 $\pm$ 0.1 &  0.81 $\pm$ 0.1 & \winner{0.39 $\pm$ 0.2} &  0.67 $\pm$ 0.3 & 0.58 $\pm$ 0.2 \\
& Titanic    &  \winner{0.47 $\pm$ 0.2} & \winner{0.70 $\pm$ 0.0} &  \winner{0.52 $\pm$ 0.3} & \winner{0.69 $\pm$ 0.0} &  \winner{0.52 $\pm$ 0.3} & 0.99 $\pm$ 0.2 &  \winner{0.50 $\pm$ 0.3} & 0.81 $\pm$ 0.2 \\
\bottomrule
\end{tabular}
\end{table}

\begin{table}[htpb]
\scriptsize
\centering
\setlength{\tabcolsep}{5pt}
\renewcommand{\arraystretch}{1.25}
\caption{
Test accuracy/loss (mean $\pm$ s.d. for 5 random seeds) for test-set distribution shift with train set re-balancing via upsampling.
We highlight the best algorithms.
}
\label{table:test:upsample}
\begin{tabular}{cc|cc|cc|cc|cc}
\toprule
\multicolumn{2}{c|}{Data Set} &  \multicolumn{2}{c|}{\adacvar}  & \multicolumn{2}{c|}{\trunkcvar}  & \multicolumn{2}{c|}{\softcvar} & \multicolumn{2}{c}{\mean}  \\
\midrule \multirow{8}{*}{\STAB{\rotatebox[origin=c]{90}{Test Set Shift \textbf{Upsample}}}\hspace{-5mm}}
& Adult      &  0.46 $\pm$ 0.0 & \winner{0.69 $\pm$ 0.0} &  \winner{0.65 $\pm$ 0.1} & \winner{0.69 $\pm$ 0.0} &  \winner{0.66 $\pm$ 0.1} & \winner{0.72 $\pm$ 0.1} &  0.64 $\pm$ 0.1 & 0.76 $\pm$ 0.1 \\
& Australian &  \winner{0.76 $\pm$ 0.0} & \winner{0.61 $\pm$ 0.0} &  0.55 $\pm$ 0.1 & 0.68 $\pm$ 0.0 &  0.64 $\pm$ 0.3 & 0.67 $\pm$ 0.4 &  0.61 $\pm$ 0.3 & 0.69 $\pm$ 0.3 \\
& German     &  \winner{0.52 $\pm$ 0.1} & \winner{0.69 $\pm$ 0.0} &  0.46 $\pm$ 0.1 & 0.73 $\pm$ 0.0 &  \winner{0.52 $\pm$ 0.1} & 0.89 $\pm$ 0.2 &  \winner{0.53 $\pm$ 0.1} & 0.75 $\pm$ 0.1 \\
& Monks      &  \winner{0.63 $\pm$ 0.1} & \winner{0.66 $\pm$ 0.0} &  0.46 $\pm$ 0.2 & 0.71 $\pm$ 0.0 &  0.51 $\pm$ 0.2 & 0.92 $\pm$ 0.5 &  0.49 $\pm$ 0.2 & 0.79 $\pm$ 0.2 \\
& Phoneme    &  0.50 $\pm$ 0.2 & \winner{0.69 $\pm$ 0.0} &  0.52 $\pm$ 0.2 & \winner{0.69 $\pm$ 0.0} &  0.55 $\pm$ 0.2 & 0.90 $\pm$ 0.4 &  \winner{0.57 $\pm$ 0.1} & 0.82 $\pm$ 0.3 \\
& Spambase   &  \winner{0.91} $\pm$ 0.0 & \winner{0.54 $\pm$ 0.0} &  0.82 $\pm$ 0.1 & 0.63 $\pm$ 0.0 &  0.80 $\pm$ 0.1 & \winner{0.44 $\pm$ 0.2} &  0.79 $\pm$ 0.2 & \winner{0.43 $\pm$ 0.2} \\
& Splice     &  \winner{0.92} $\pm$ 0.0 & \winner{0.28 $\pm$ 0.0} &  0.84 $\pm$ 0.1 & 0.53 $\pm$ 0.1 &  0.83 $\pm$ 0.1 & \winner{0.36 $\pm$ 0.2} &  0.71 $\pm$ 0.3 & 0.53 $\pm$ 0.4 \\
& Titanic    &  \winner{0.47 $\pm$ 0.2} & \winner{0.70 $\pm$ 0.0} &  \winner{0.52 $\pm$ 0.3} & \winner{0.69 $\pm$ 0.0} &  \winner{0.52 $\pm$ 0.3} & 0.99 $\pm$ 0.2 &  \winner{0.50 $\pm$ 0.3} & 0.81 $\pm$ 0.2 \\
\bottomrule
\end{tabular}
\end{table}

\FloatBarrier
\newpage 
\paragraph{Double Shift}
Finally, we consider a case where the train set is imbalanced (1-to-10 ratio), and the test set is even more imbalanced (1-to-100 ratio). 
We show the results without re-sampling in \cref{table:double:} and with upsampling in \cref{table:double:upsample}. In this case, we upsampling is detrimental. 
\adacvar clearly outperforms all the other algorithms with upsampling and without upsampling. 

\begin{table}[htpb]
\scriptsize
\centering
\setlength{\tabcolsep}{5pt}
\renewcommand{\arraystretch}{1.25}
\caption{
Test accuracy/loss (mean $\pm$ s.d. for 5 random seeds) for imbalanced train and test-sets without train set re-balancing.
We highlight the best algorithms.
}
\label{table:double:}
\begin{tabular}{cc|cc|cc|cc|cc}
\toprule
\multicolumn{2}{c|}{Data Set} &  \multicolumn{2}{c|}{\adacvar}  & \multicolumn{2}{c|}{\trunkcvar}  & \multicolumn{2}{c|}{\softcvar} & \multicolumn{2}{c}{\mean}  \\
\midrule \multirow{8}{*}{\STAB{\rotatebox[origin=c]{90}{Double Set Shift}}\hspace{-5mm}}
& Adult      &  \winner{0.94 $\pm$ 0.0} & \winner{0.50 $\pm$ 0.0} &  0.70 $\pm$ 0.2 & 0.69 $\pm$ 0.0 &  0.70 $\pm$ 0.2 & 0.63 $\pm$ 0.3 &  0.69 $\pm$ 0.2 & 0.66 $\pm$ 0.4 \\
& Australian &  \winner{0.90 $\pm$ 0.0} & \winner{0.39 $\pm$ 0.0} &  0.59 $\pm$ 0.2 & 0.67 $\pm$ 0.1 &  0.69 $\pm$ 0.3 & 0.59 $\pm$ 0.4 &  0.66 $\pm$ 0.3 & 0.61 $\pm$ 0.3 \\
& German     &  \winner{0.95 $\pm$ 0.0} & \winner{0.43 $\pm$ 0.0} &  0.52 $\pm$ 0.2 & 0.70 $\pm$ 0.1 &  0.58 $\pm$ 0.2 & 0.79 $\pm$ 0.3 &  0.59 $\pm$ 0.2 & 0.69 $\pm$ 0.2 \\
& Monks      &  \winner{0.95 $\pm$ 0.0} & \winner{0.48 $\pm$ 0.1} &  0.50 $\pm$ 0.2 & 0.70 $\pm$ 0.0 &  0.58 $\pm$ 0.3 & 0.81 $\pm$ 0.4 &  0.56 $\pm$ 0.3 & 0.72 $\pm$ 0.3 \\
& Phoneme    &  \winner{0.58 $\pm$ 0.3} & \winner{0.67 $\pm$ 0.0} &  \winner{0.57 $\pm$ 0.2} & 0.69 $\pm$ 0.0 &  \winner{0.61 $\pm$ 0.2} & 0.79 $\pm$ 0.4 &  \winner{0.63 $\pm$ 0.2} & 0.73 $\pm$ 0.3 \\
& Spambase   &  \winner{0.98 $\pm$ 0.0} & \winner{0.19 $\pm$ 0.0} &  0.85 $\pm$ 0.1 & 0.60 $\pm$ 0.1 &  0.83 $\pm$ 0.1 & 0.39 $\pm$ 0.2 &  0.82 $\pm$ 0.2 & 0.39 $\pm$ 0.2 \\
& Splice     &  \winner{0.98 $\pm$ 0.0} & \winner{0.18 $\pm$ 0.0} &  0.86 $\pm$ 0.1 & 0.52 $\pm$ 0.1 &  0.85 $\pm$ 0.1 & 0.31 $\pm$ 0.2 &  0.75 $\pm$ 0.3 & 0.46 $\pm$ 0.4 \\
& Titanic    &  \winner{0.85 $\pm$ 0.2} & \winner{0.49 $\pm$ 0.0} &  0.52 $\pm$ 0.3 & 0.69 $\pm$ 0.0 &  0.59 $\pm$ 0.3 & 0.85 $\pm$ 0.3 &  0.57 $\pm$ 0.3 & 0.74 $\pm$ 0.3 \\
\bottomrule
\end{tabular}
\end{table}

\begin{table}[htpb]
\scriptsize
\centering
\setlength{\tabcolsep}{5pt}
\renewcommand{\arraystretch}{1.25}
\caption{
Test accuracy/loss (mean $\pm$ s.d. for 5 random seeds) for imbalanced train and test-sets with train set re-balancing via up-sampling.
We highlight the best algorithms.}
\label{table:double:upsample}
\begin{tabular}{cc|cc|cc|cc|cc}
\toprule
\multicolumn{2}{c|}{Data Set} &  \multicolumn{2}{c|}{\adacvar}  & \multicolumn{2}{c|}{\trunkcvar}  & \multicolumn{2}{c|}{\softcvar} & \multicolumn{2}{c}{\mean}  \\
\midrule \multirow{8}{*}{\STAB{\rotatebox[origin=c]{90}{Double Set Shift \textbf{Upsample}}}\hspace{-5mm}}
& Adult      &  0.40 $\pm$ 0.1 & \winner{0.69 $\pm$ 0.0} &  \winner{0.68 $\pm$ 0.2} & \winner{0.69 $\pm$ 0.0} &  \winner{0.72 $\pm$ 0.2} & \winner{0.60 $\pm$ 0.3} &  \winner{0.71 $\pm$ 0.2} & \winner{0.62 $\pm$ 0.4} \\
& Australian &  \winner{0.86 $\pm$ 0.0} & \winner{0.46 $\pm$ 0.0} &  0.61 $\pm$ 0.2 & 0.66 $\pm$ 0.1 &  0.71 $\pm$ 0.3 & \winner{0.56 $\pm$ 0.2} &  0.67 $\pm$ 0.3 & 0.59 $\pm$ 0.2 \\
& German     &  \winner{0.78 $\pm$ 0.1} & \winner{0.58 $\pm$ 0.0} &  0.55 $\pm$ 0.2 & 0.69 $\pm$ 0.1 &  0.60 $\pm$ 0.2 & 0.75 $\pm$ 0.2 &  0.60 $\pm$ 0.2 & 0.68 $\pm$ 0.1 \\
& Monks      &  \winner{0.68 $\pm$ 0.1 }& \winner{0.64 $\pm$ 0.0} &  0.51 $\pm$ 0.2 & 0.69 $\pm$ 0.0 &  0.59 $\pm$ 0.2 & \winner{0.78 $\pm$ 0.3} &  0.57 $\pm$ 0.2 & \winner{0.72 $\pm$ 0.2} \\
& Phoneme    &  \winner{0.49 $\pm$ 0.3} & \winner{0.69 $\pm$ 0.0} &  \winner{0.59 $\pm$ 0.2} & \winner{0.69 $\pm$ 0.0} &  \winner{0.63 $\pm$ 0.2} & \winner{0.75 $\pm$ 0.2} &  \winner{0.65 $\pm$ 0.2} & \winner{0.70 $\pm$ 0.2} \\
& Spambase   &  \winner{0.95 $\pm$ 0.0} & \winner{0.26 $\pm$ 0.1} &  0.86 $\pm$ 0.1 & 0.60 $\pm$ 0.1 &  0.84 $\pm$ 0.1 & \winner{0.36 $\pm$ 0.2} &  0.83 $\pm$ 0.2 & \winner{0.37 $\pm$ 0.2} \\
& Splice     &  \winner{0.96 $\pm$ 0.0} & \winner{0.18 $\pm$ 0.0} &  0.87 $\pm$ 0.1 & 0.50 $\pm$ 0.1 & 0.86 $\pm$ 0.1 & \winner{0.29 $\pm$ 0.2} &  0.77 $\pm$ 0.3 & 0.43 $\pm$ 0.2 \\
& Titanic    &  \winner{0.62 $\pm$ 0.3} & \winner{0.68 $\pm$ 0.0} &  \winner{0.53 $\pm$ 0.3} & \winner{0.69 $\pm$ 0.0} &  \winner{0.58 $\pm$ 0.3} & 0.82 $\pm$ 0.4 &  \winner{0.57 $\pm$ 0.3} & \winner{0.72 $\pm$ 0.3} \\
\bottomrule
\end{tabular}
\end{table}

\FloatBarrier

\paragraph{Experiment Conclusion} 
Techniques that address class imbalance, such as up-sampling, are useful when there is a-priori knowledge that the test-set is balanced (cf. Train Shift experiment). 
When this is not the case, such techniques may be detrimental (cf. Double Shift experiment). 
\adacvar, and the CVaR DRO optimization problem, is orthogonal to techniques for imbalanced dataset and can be used together, as shown by the previous experiments. Overall, \adacvar also has lower standard errors than \mean and \softcvar. This indicates that it is more robust to the random sampling procedures. 
\newpage
\section{More Related Work}

\subsection{Combinatorial Bandit Algorithms}
A central contribution of our work is an \emph{efficient} sampling algorithm based on an instance of combinatorial bandits, the $k$-set problem.
In this setting, the learner must choose a subset of $k$ out of $N$ experts with maximum rewards, and there are $\binom{N}{k}$ such sets.
\citet{koolen2010hedging} introduce this problem in the full information setting ($k$-sets) and
\citet{cesa2012combinatorial} extend it to the bandit setting. \citet{cesa2012combinatorial} propose an algorithm, called CombBand, that attains a regret of $O(k^{3/2}\sqrt{NT \log(N/k)})$ when the learner receives bandit feedback.
\citet{audibert2013regret} prove a lower bound of $O(k \sqrt{NT})$, which is attained by \citet{alatur2020multi} up to a $\sqrt{\log(N)}$ factor.
However, the computational and space complexity of the CombBand algorithm is $O(kN^3)$ and $O(N^3)$, respectively.
\citet{uchiya2010algorithms} propose an efficient sampling algorithm that has $O(N\log(k))$ computational and $O(N)$ space complexity.
Instead, we efficiently represent the algorithm proposed by \citet{alatur2020multi} using Determinantal Point Processes \citep{kulesza2012determinantal}.
This yields an algorithm with $O(\log(N))$ computational and $O(N)$ space complexity.

\subsection{Sampling from \texorpdfstring{$k$}-DPPs}
We propose to directly sample from the marginals of the (relaxed) $k$-DPP. 
Our setting has the advantage that the k-DPP is diagonal and there is no need for performing an eigendecomposition of the kernel matrix, which takes $O(N^3)$ operations. 
However, there are efficient methods that avoid the eigendecomposition and return a sample (of size $k$) of the $k$-DPP. Sampling uniformly at random from such sample can be done in constant time.
State-of-the-art exact sampling methods for k-DPPs take at least $O(N \operatorname{poly}(k))$ operations
using rejection sampling \citep{derezinski2019exact}, whereas approximate methods that use MCMC have mixing times of $O(Nk)$ \citep{anari2016monte}.
Hence, such algorithms are ineficient compared to our sampling algorithm that takes only $O(\log(N))$. This is because our method is specialized on diagonal $k$-DPPs and latter algorithms remain valid for general $k$-DPPs.

\end{document}